\newcommand{\num}[1]{\relax\ifmmode \mathbb #1\else $\mathbb #1$\fi}
\newcommand{\oor}{\vee}
\newcommand{\aand}{\wedge}
\newcommand{\until}{\hspace{1mm}\mathcal{U}\hspace{1mm}}
\newcommand{\always}{\square}
\newcommand{\eventually}{\Diamond}
\renewcommand{\next}{\ocircle}
\newcommand{\true}{\mathit{True}}
\renewcommand{\i}{\iota}
\newcommand{\A}{{\mathcal{A}}}
\newcommand{\C}{{\mathcal{C}}}
\newcommand{\lang}{\mathcal{L}}
\newcommand{\M}{{\mathcal{M}}}
\newcommand{\T}{{\mathcal{T}}}
\newcommand{\prob}{\mathrm{Pr}}
\newtheorem{lemma}{Lemma}
\newtheorem{corollary}{Corollary}
\newtheorem{definition}{Definition}
\newtheorem{example}{Example}
\title{\LARGE \bf
Incremental Temporal Logic Synthesis of Control Policies 
for Robots Interacting with Dynamic Agents
}
\author{Tichakorn Wongpiromsarn, Alphan Ulusoy, Calin Belta, Emilio Frazzoli and Daniela Rus\vspace{-5mm}
\thanks{T. Wongpiromsarn is with the Singapore-MIT Alliance for Research and Technology,
Singapore 117543, Singapore. {\tt\footnotesize nok@smart.mit.edu}}
\thanks{A. Ulusoy and C. Belta are with Boston University,
Boston, MA, USA {\tt\footnotesize alphan@bu.edu, cbelta@bu.edu}}%
\thanks{E. Frazzoli and D. Rus are with the Massachusetts Institute of Technology,
Cambridge, MA, USA {\tt\footnotesize frazzoli@mit.edu, rus@csail.mit.edu}}%
}
\begin{document}
\maketitle
\thispagestyle{empty}
\pagestyle{empty}

\begin{abstract}
We consider the synthesis of control policies from temporal logic specifications
for robots that interact with multiple dynamic environment agents.
Each environment agent is modeled by a Markov chain whereas
the robot is modeled by a finite transition system (in the deterministic case) or Markov decision process (in the stochastic case).
Existing results in probabilistic verification are adapted to solve the synthesis problem.
To partially address the state explosion issue,
we propose an incremental approach where only a small subset of environment agents is incorporated in the synthesis procedure initially
and more agents are successively added until we hit the constraints on computational resources.
Our algorithm runs in an anytime fashion where the probability that the robot satisfies its specification increases
as the algorithm progresses.
\end{abstract}

\section{Introduction}
Temporal logics \cite{Allen:TML90,zohar92,Baier:PMC2008} have been recently employed to precisely express complex behaviors of robots.
In particular, given a robot specification expressed as a formula in a temporal logic,
control policies that ensure or maximize the probability that the robot
satisfies the specification can be automatically synthesized 
based on exhaustive exploration of the state space
\cite{fainekos05,kress-gazit07,belta2007symbolic,conner07rsj,karaman09cdc,Bhatia:SMP2010,Ding:LTL2011,Medina11rsj,KWT-RAM}.
Consequently, the main limitation of existing approaches for synthesizing control policies
from temporal logic specifications is almost invariably due to a combinatorial blow up of the state space, 
commonly known as the state explosion problem.

In many applications, robots need to interact with external, potentially dynamic agents, including human and other robots.
As a result, the control policy synthesis problem becomes more computationally complex as more external agents
are incorporated in the synthesis procedure.
Consider, as an example, the problem where an autonomous vehicle needs to go through a pedestrian crossing while there
are multiple pedestrians who are already at or approaching the crossing.
The state space of the complete system (i.e., the vehicle and all the pedestrians)
grows exponentially with the number of the pedestrians.
Hence, given a limited budget of computational resources,
solving the control policy synthesis problem with respect to temporal logic specifications
may not be feasible when there are a large number of pedestrians.

In this paper, we partially address the aforementioned issue and
propose an algorithm for computing a robot control policy in an anytime manner.
Our algorithm progressively computes a sequence of control policies, taking into account only a small subset of the environment agents initially 
and successively adds more agents to the synthesis procedure in each iteration until the computational resource constraints are exceeded.
As opposed to existing incremental synthesis approaches that handle temporal logic specifications where
representative robot states are incrementally added to the synthesis procedure \cite{karaman09cdc},
we consider incrementally adding representative environment agents instead.

The main contribution of this paper is twofold.
First, we propose an anytime algorithm for synthesizing a control policy for a robot interacting with multiple environment agents
with the objective of maximizing the probability for the robot to satisfy a given temporal logic specification.
Second, an incremental construction of various objects needed to be computed during the synthesis procedure
is proposed.
Such an incremental construction makes our anytime algorithm more efficient
by avoiding unnecessary computation and exploiting the objects computed in the previous iteration.
Experimental results show that not only we obtain a reasonable solution much faster, 
but we are also able to obtain an optimal solution faster than existing approaches.

The rest of the paper is organized as follows: 
We provide useful definitions and descriptions of the formalisms in the following section. 
Section \ref{sec:prob} is dedicated to the problem formulation. 
Section \ref{sec:syn} provides a complete solution to the control policy synthesis problem
for robots that interact with environment agents.
Incremental computation of control policies is discussed in Section \ref{sec:incremental}.
Section \ref{sec:example} presents experimental results.
Finally, Section \ref{sec:conclusions} concludes the paper and discusses future work.

\section{Preliminaries}
We consider systems that comprise multiple (possibly stochastic) components.
In this section, we define the formalisms used in this paper to describe such systems and their desired properties.
Throughout the paper, we let $X^*$, $X^\omega$ and $X^+$ denote the set of 
finite, infinite and nonempty finite strings, respectively, of a set $X$.

\subsection{Automata}
\begin{definition} 
A \emph{deterministic finite automaton}  (DFA) is a tuple
$\A = (Q, \Sigma, \delta, q_{init}, F)$ where
\begin{itemize}
\item $Q$ is a finite set of states,
\item $\Sigma$ is a finite set called alphabet,
\item $\delta: Q \times \Sigma \to Q$ is a transition function,
\item $q_{init} \in Q$ is the initial state, and
\item $F \subseteq Q$ is a set of final states.
\end{itemize}
We use the relation notation, $q \stackrel{w}{\longrightarrow} q'$
to denote $\delta(q,w) = q'$.
\end{definition}

Consider a finite string $\sigma = \sigma_1\sigma_2\ldots \sigma_n \in \Sigma^*$.
A \emph{run} for $\sigma$ in a DFA $\A = (Q, \Sigma, \delta, q_{init}, F)$ is a finite sequence of states
$q_0q_1\ldots q_n$ such that $q_0 = q_{init}$ and 
$q_0 \stackrel{\sigma_1}{\longrightarrow} q_1 \stackrel{\sigma_2}{\longrightarrow} q_2 
\stackrel{\sigma_3}{\longrightarrow} \ldots \stackrel{\sigma_n}{\longrightarrow} q_n$.
A run is \emph{accepting} if $q_n \in F$.
A string $\sigma \in \Sigma^*$ is \emph{accepted} by $\A$ if there is an accepting run of $\sigma$ in $\A$.
The language \emph{accepted} by $\A$, denoted by $\lang(\A)$, is the set of all accepted strings of $\A$.

\subsection{Linear Temporal Logic}
Linear temporal logic (LTL) is a branch of logic that can be used to reason about a time line.
An LTL formula is built up from a set $\Pi$ of atomic propositions,
the logic connectives $\neg$, $\oor$, $\aand$ and $\Longrightarrow$
and the temporal modal operators 
$\next$ (``next''), $\always$ (``always''), $\eventually$ (``eventually'') and $\until$ (``until'').
An LTL formula over a set $\Pi$ of atomic propositions is inductively defined as
\[
  \varphi := \true \hspace{1mm}|\hspace{1mm} p \hspace{1mm}|\hspace{1mm} \neg \varphi \hspace{1mm}|\hspace{1mm}
   \varphi \aand \varphi \hspace{1mm}|\hspace{1mm} \next \varphi \hspace{1mm}|\hspace{1mm} \varphi \until \varphi 
\]
where $p \in \Pi$.
Other operators can be defined as follows: 
$\varphi \aand \psi = \neg(\neg\varphi \oor \neg\psi)$, 
$\varphi \Longrightarrow \psi = \neg \varphi \oor \psi$,
$\eventually\varphi = \true \until \varphi$, and
$\always\varphi = \neg\eventually\neg\varphi$.

\textit{\textbf{Semantics of LTL:}}
LTL formulas are interpreted on infinite strings over $2^{\Pi}$.
Let $\sigma = \sigma_0 \sigma_1 \sigma_2 \ldots$ where $\sigma_i \in 2^{\Pi}$ for all $i \geq 0$.
The	satisfaction relation $\models$	is defined inductively on LTL formulas as follows:
\begin{itemize}
\item $\sigma \models \true$,
\item for an atomic proposition $p \in \Pi$, $\sigma \models p$ if and only if $p \in \sigma_0$,
\item $\sigma \models \neg \varphi$ if and only if $\sigma \not\models \varphi$,
\item $\sigma \models \varphi_1 \aand \varphi_2$ if and only if $\sigma \models \varphi_1$ and $\sigma \models \varphi_2$,
\item $\sigma \models \next\varphi$ if and only if $\sigma_1 \sigma_2\ldots \models \varphi$, and
\item $\sigma \models \varphi_1 \until \varphi_2$ if and only if there exists $j \geq 0$ such that 
$\sigma_j \sigma_{j+1} \ldots \models \varphi_2$ and for all $i$ such all $0 \leq i < j$, $\sigma_i \sigma_{i+1}\ldots \models \varphi_1$.
\end{itemize}

More details on LTL can be found, e.g., in \cite{Allen:TML90,zohar92,Baier:PMC2008}.

In this paper, we are particularly interested in a class of LTL known as
co-safety formulas.
An important property of a co-safety formula is that any word
satisfying the formula has a finite good prefix, i.e., a finite prefix that cannot be extended to violate the formula.
Specifically, given an alphabet $\Sigma$, 
a language $L \subseteq \Sigma^\omega$ is co-safety if and only if
every $w \in L$ has a good prefix $x \in \Sigma^*$ such that for all
$y \in \Sigma^\omega$, we have $x \cdot y \in L$.
In general, the problem of determining whether an LTL formula is co-safety is PSPACE-complete \cite{Kupferman:MSC2001}.
However, there is a class of co-safety formulas, known as \emph{syntactically co-safe} LTL formulas,
which can be easily characterized.
A \emph{syntactically co-safe} LTL formula over $\Pi$
is an LTL formula over $\Pi$ whose only temporal operators are $\next$, $\eventually$ and $\until$	
when written in	positive normal	form where the negation operator $\neg$ occurs 
only in front of atomic propositions \cite{Baier:PMC2008,Kupferman:MSC2001}.
It can be shown that for any syntactically co-safe formula $\varphi$,
there exists a DFA $\A_\varphi$ that accepts all and only words in $pref(\varphi)$, i.e.,
$\lang(\A_\varphi) = pref(\varphi)$,
where $pref(\varphi)$ denote the set of all good prefixes for $\varphi$ \cite{Bhatia:SMP2010}.

\subsection{Systems and Control Policies}
We consider the case where each component of the system can be modeled by a deterministic finite transition system,
Markov chain or Markov decision process, depending on the characteristics of that component.
These different models are defined as follows.

\begin{definition}
	A \emph{deterministic finite transition system (DFTS)} is a tuple
	$\T = (S, Act, \longrightarrow, s_{init}, \Pi, L)$ where
	\begin{itemize}
	\item $S$ is a finite set of states,
	\item $Act$ is a finite set of actions,
	\item $\longrightarrow \subseteq S \times Act \times S$ is a transition relation such that
	for all $s \in S$ and $\alpha \in Act$, $|Post(s, \alpha)| \leq 1$ where
	$Post(s, \alpha) = \{s' \in S \hspace{1mm}|\hspace{1mm} (s, \alpha, s') \in \longrightarrow\}$,
	\item $s_{init} \in S$ is the initial state,
	\item $\Pi$ is a set of atomic propositions, and
	\item $L : S \to 2^\Pi$ is a labeling function.
	\end{itemize}
	
	$(s, \alpha, s') \in \longrightarrow$ is denoted by $s \stackrel{\alpha}{\longrightarrow} s'$.
	An action $\alpha$ is \emph{enabled} in state $s$ if and only if there exists $s'$ such that
	$s \stackrel{\alpha}{\longrightarrow} s'$.
\end{definition}

\begin{definition}
	A \emph{(discrete-time) Markov chain} (MC) is a tuple $\M = (S, \mathbf{P}, \i_{init}, \Pi, L)$ where
	$S$, $\Pi$ and $L$ are defined as in DFTS and
	\begin{itemize}
	\item $\mathbf{P} : S \times S \to [0,1]$ is the transition probability function such that for any state $s \in S$,
	$\sum_{s' \in S} \mathbf{P}(s, s') = 1$, and
	\item $\i_{init} : S \to [0,1]$ is the initial state distribution satisfying $\sum_{s \in S} \i_{init}(s) = 1$.
	\end{itemize}
\end{definition}

\begin{definition}
	A \emph{Markov decision process} (MDP) is a tuple
	$\M = (S, Act, \mathbf{P}, \i_{init}, \Pi, L)$ where
	$S$, $Act$, $\i_{init}$, $\Pi$ and $L$ are defined as in DFTS and MC and
	$\mathbf{P} : S \times Act \times S \to [0,1]$ is the transition probability function such that for any state $s \in S$ and action $\alpha \in Act$,
	$\sum_{s' \in S} \mathbf{P}(s, \alpha, s') \in \{0, 1\}$.

	An action $\alpha$ is \emph{enabled} in state $s$ if and only if $\sum_{s' \in S} \mathbf{P}(s, \alpha, s') = 1$.
	Let $Act(s)$ denote the set of enabled actions in $s$.
\end{definition}

Given a complete system as the composition of all its components, we are interested in computing
a control policy for the system that optimizes certain objectives.
We define a control policy for a system modeled by an MDP as follows.

\begin{definition}
	Let $\M = (S, Act, \mathbf{P}, \i_{init}, \Pi, L)$ be a Markov decision process.
	A \emph{control policy} for $\M$ is a function $\C : S^+ \to Act$
	such that $\C(s_0s_1\ldots s_n) \in Act(s_n)$ for all $s_0s_1\ldots s_n \in S^+$.
\end{definition}

Let $\M = (S, Act, \mathbf{P}, \i_{init}, \Pi, L)$ be an MDP
and $\C : S^+ \to Act$ be a control policy for $\M$.
Given an initial state $s_0$ of $\M$ such that $\i_{init}(s_0) > 0$, 
an infinite sequence $r_\M^\C = s_0 s_1 \ldots$ on $\M$ generated under policy $\C$
is called a \emph{path} on $\M$ if $\mathbf{P}(s_i, \C(s_0s_1\ldots s_i), s_{i+1}) > 0$ for all $i$.
The subsequence $s_0s_1 \ldots s_n$ where $n \geq 0$ is
the \emph{prefix} of length $n$ of $r_\M^\C$.
We define $Paths_\M^\C$ and $FPaths_\M^\C$ as the set of all infinite paths of $\M$
under policy $\C$ and their finite prefixes, respectively, 
starting from any state $s_0$ with $\i_{init}(s_0) > 0$.
For $s_0s_1\ldots s_n \in FPaths_\M^\C$, we let $Paths_\M^\C(s_0s_1\ldots s_n)$
denote the set of all paths in $Paths_\M^\C$ with the prefix $s_0s_1 \ldots s_n$.

The $\sigma$-algebra associated with $\M$ under policy $\C$
is defined as the smallest $\sigma$-algebra that contains $Paths_\M^\C(\hat{r}_\M^\C)$
where $\hat{r}_\M^\C$ ranges over all finite paths in $FPaths_\M^\C$.
It follows that there exists a unique probability measure $Pr_\M^\C$ on the $\sigma-$algebra 
associated with $\M$ under policy $\C$ where for any $s_0s_1\ldots s_n \in FPaths_\M^\C$,
\begin{equation*}
  \begin{array}{l}
  \prob_\M^\C\{Paths_\M^\C(s_0s_1\ldots s_n)\} = \\
  \hspace{10mm}\i_{init}(s_0) \prod_{0 \leq i < n} \mathbf{P}(s_i, \C(s_0s_1\ldots s_i), s_{i+1}).
  \end{array}
\end{equation*}

Given an LTL formula $\varphi$, one can show that the set 
$\{s_0 s_1 \ldots \in Paths_\M^\C \hspace{1mm}|\hspace{1mm} L(s_0) L(s_1) \ldots \models \varphi\}$
is measurable \cite{Baier:PMC2008}.
The probability for $\M$ to satisfy $\varphi$ under policy $\C$ is then defined as
\begin{equation*}
  \prob_\M^\C(\varphi) = \prob_\M^\C\{s_0 s_1 \ldots \in Paths_\M^\C \hspace{1mm}|\hspace{1mm}
  L(s_0) L(s_1) \ldots \models \varphi\} .
\end{equation*}

For a given (possibly noninitial) state $s \in S$, we let $\M^s = (S, Act, \mathbf{P}, \i^s_{init}, \Pi, L)$ where 
$\i^s_{init}(t) = 1$ if $s = t$ and $\i^s_{init}(t) = 0$ otherwise.
We define $\prob_\M^\C(s \models \varphi) = \prob_{\M^s}^\C(\varphi)$ 
as the probability for $\M$ to satisfy $\varphi$ under policy $\C$, starting from $s$.

A control policy essentially resolves all nondeterministic choices in an MDP and induces a Markov chain $\M_\C$
that formalizes the behavior of $\M$ under control policy $\C$ \cite{Baier:PMC2008}.
In general, $\M_\C$ contains all the states in $S^+$ and hence may not be finite even though $\M$ is finite.
However, for a special case where $\C$ is memoryless, it can be shown that $\M_\C$ can be identified with a finite MC.

\begin{definition}
	Let $\M = (S, Act, \mathbf{P}, \i_{init}, \Pi, L)$ be a Markov decision process.
	A control policy $\C$ on $\M$ is \emph{memoryless} if and only if for each sequence
	$s_0s_1\ldots s_n$ and $t_0t_1\ldots t_m \in S^+$ with $s_n = t_m$,
	$\C(s_0s_1\ldots s_n) = \C(t_0t_1\ldots t_m)$.
	A memoryless control policy $\C$ can be described by a function $\C : S \to Act$.
\end{definition}

\section{Problem Formulation}
\label{sec:prob}
Consider a system that comprises the plant (e.g., the robot) and $N$ independent environment agents.
We assume that at any time instance, the state of the system, which incorporates the state of the plant and the environment agents,
can be precisely observed.
The system can regulate the state of the plant but
has no control over the state of the environment agents.
Hence, we do not distinguish between a control policy \emph{for the system} and a control policy \emph{for the plant}
and refer to them as a control policy in general, 
as there is no confusion that in both cases, only the state of the plant can be regulated and
both the system and the plant can precisely observe the current state of the complete system.
Hence, even though a control policy may be implemented on the plant,
it may be defined over the state of the complete system.

We assume that each environment agent can be modeled by a finite Markov chain.
Let $\M_i  = (S_i, \mathbf{P}_i, \i_{init,i}, \Pi_i, L_i)$ be the model of  the $i$th environment agent.
The plant is modeled either by a deterministic finite transition system or by a finite Markov decision process,
depending on whether each control action leads to a deterministic state transition.
We use $\T$ to denote the model of the plant and let
$\T = (S_0, Act, \longrightarrow, s_{init,0}, \Pi_0, L_0)$ for the case where $\T$ is a DFTS
and $\T = (S_0, Act, \mathbf{P}_0, \i_{init,0}, \Pi_0, L_0)$ for the case where $\T$ is an MDP.
For the simplicity of the presentation, we assume that for all $s \in S_0$, $Act(s) \not= \emptyset$.
In addition, we assume that all the components $\T, \M_1, \M_2, \ldots, \M_N$ in the system make a transition simultaneously, i.e., 
each of them makes a transition at every time step.

\begin{example}
\label{ex:ped-models}
Consider a problem where an autonomous vehicle (plant) needs to go through a pedestrian crossing while there are $N$
pedestrians (agents) who are already at or approaching the crossing.
Suppose the road is discretized into a finite number of cells $c_0, c_2, \ldots, c_M$.
The vehicle is modeled by either a DFTS $\T = (S_0, Act, \longrightarrow, s_{init,0}, \Pi_0, L_0)$
or an MDP $\T = (S_0, Act, \mathbf{P}_0, \i_{init,0}, \Pi_0, L_0)$ whose state $s \in S_0$ describes the cell occupied by the vehicle 
and whose action $\alpha \in Act$ corresponds to a motion primitive of the vehicle (e.g., stop, accelerate, decelerate).
If each motion primitive leads to a deterministic change in the vehicle's state, then $\T$ is a DFTS.
Otherwise, $\T$ is an MDP.
The motion of the $i$th pedestrian is modeled by an MC $\M_i  = (S_i, \mathbf{P}_i, \i_{init,i}, \Pi_i, L_i)$ whose
state $s \in S_i$ describes the cell occupied by the $i$th pedestrian.
The labeling function $L_i$, $i \in \{0, \ldots, N\}$ essentially maps each cell to its label,
indexed by the agent ID,
i.e., $L_i(c_j) = c_{j}^i$ for all $j \in \{0, \ldots M\}$.
\end{example}

\paragraph*{\textbf{Control Policy Synthesis Problem}}
Given a system model described by $\T, \M_1, \ldots, \M_N$ and a syntactically co-safe LTL formula 
$\varphi$ over $\Pi_0 \cup \Pi_1 \cup \ldots \cup \Pi_N$, 
we want to automatically synthesize a control policy that maximizes the probability for the system to satisfy $\varphi$.

\begin{example}
\label{ex:ped-spec}
Consider the autonomous vehicle problem described in Example \ref{ex:ped-models} and
the desired property stating that the vehicle does not collide with any pedestrian until
it reaches cell $c_M$ (e.g., the other side of the pedestrian crossing).
In this case, the specification $\varphi$ is given by
$\varphi = \left(\neg \bigvee_{i \geq 1, j \geq 0} (c^0_j \aand c^i_j)\right) \until c^0_M$.
Using simple logic manipulation, it can be checked that $\varphi$ is a co-safe LTL formula.
\end{example}

\section{Control Policy Synthesis}
\label{sec:syn}

We employ existing results in probabilistic verification and consider the following 3 main steps 
to solve the control policy synthesis problem defined in Section \ref{sec:prob}:
\begin{enumerate}
\item Compute the composition of all the system components  to obtain the complete system.
\item Construct the product MDP.
\item Extract an optimal control policy for the product MDP.
\end{enumerate}

In this section, we describe these steps in more detail and discuss their connection to our control policy synthesis problem
described in Section \ref{sec:prob}.

\subsection{Parallel Composition of System Components}
\label{ssec:comp}
Assuming that all the components of the system make a transition simultaneously,
we first construct the synchronous parallel composition of all the components to obtain the complete system.
Synchronous parallel composition of different types of components is defined as follows.

\begin{definition} 
	Let $\M_1 = (S_1, \mathbf{P}_1, \i_{init,1}, \Pi_1, L_1)$ and
	$\M_2 = (S_2, \mathbf{P}_2, \i_{init,2}, \Pi_2, L_2)$ be Markov chains.
	Their synchronous parallel composition, denoted by $\M_1 || \M_2$, 
	is the MC $\M = (S_1 \times S_2, \mathbf{P}, \i_{init}, \Pi_1 \cup \Pi_2, L)$ where:
	\begin{itemize}
	\item For each $s_1, s_1' \in S_1$ and $s_2, s_2' \in S_2$, 
	$\mathbf{P}(\langle s_1, s_2 \rangle, \langle s_1', s_2' \rangle) = \mathbf{P}_1(s_1, s_1')\mathbf{P}_2(s_2, s_2')$.
	\item For each $s_1 \in S_1$ and $s_2 \in S_2$, $\i_{init}(\langle s_1, s_2 \rangle) = \i_{init,1}(s_1)\i_{init,2}(s_2)$.
	\item For each $s_1 \in S_1$ and $s_2 \in S_2$, $L(\langle s_1, s_2 \rangle) = L(s_1) \cup L(s_2)$.
	\end{itemize}
\end{definition}

\begin{definition} 
	Let $\T_1 = (S_1, Act, \longrightarrow, s_{init}, \Pi_1, L_1)$ be a deterministic finite transition system and
	$\M_2 = (S_2, \mathbf{P}_2, \i_{init,2}, \Pi_2, L_2)$ be a Markov chain.
	Their synchronous parallel composition,
	denoted by $\T_1 || \M_2$, 
	is the MDP $\M = (S_1 \times S_2, Act, \mathbf{P}, \i_{init}, \Pi_1 \cup \Pi_2, L)$ where:
	\begin{itemize}
	\item For each $s_1, s_1' \in S_1$, $s_2, s_2' \in S_2$ and $\alpha \in Act$, 
	$\mathbf{P}(\langle s_1, s_2 \rangle, \alpha, \langle s_1', s_2' \rangle) = \mathbf{P}_2(s_2, s_2')$ if $s_1 \stackrel{\alpha}{\longrightarrow} s_1'$
	and $\mathbf{P}(\langle s_1, s_2 \rangle, \alpha, \langle s_1', s_2' \rangle) = 0$ otherwise.
	\item For each $s_2 \in S_2$, $\i_{init}(\langle s_{init}, s_2 \rangle) = \i_{init,2}(s_2)$ 
	and $\i_{init}(\langle s_1, s_2 \rangle) = 0$ for all $s_1 \in S \setminus \{s_{init}\}$.
	\item For each $s_1 \in S_1$ and $s_2 \in S_2$, $L(\langle s_1, s_2 \rangle) = L(s_1) \cup L(s_2)$.
	\end{itemize}
\end{definition}

\begin{definition} 
	Let $\M_1 = (S_1, Act, \mathbf{P}_1, \i_{init,1}, \Pi_1, L_1)$ be a Markov decision process and
	$\M_2 = (S_2, \mathbf{P}_2, \i_{init,2}, \Pi_2, L_2)$ be a Markov chain.
	Their synchronous parallel composition,
	denoted by $\M_1 || \M_2$, 
	is the MDP $\M = (S_1 \times S_2, Act, \mathbf{P}, \i_{init}, \Pi_1 \cup \Pi_2, L)$ where:
	\begin{itemize}
	\item For each $s_1, s_1' \in S_1$, $s_2, s_2' \in S_2$ and $\alpha \in Act$,
	$\mathbf{P}(\langle s_1, s_2 \rangle, \alpha, \langle s_1', s_2' \rangle) = \mathbf{P}_1(s_1, \alpha, s_1')\mathbf{P}_2(s_2, s_2')$.
	\item For each $s_1 \in S_1$ and $s_2 \in S_2$, $\i_{init}(\langle s_1, s_2 \rangle) = \i_{init,1}(s_1)\i_{init,2}(s_2)$.
	\item For each $s_1 \in S_1$ and $s_2 \in S_2$, $L(\langle s_1, s_2 \rangle) = L(s_1) \cup L(s_2)$.
	\end{itemize}
\end{definition}

From the above definitions, our complete system can be modeled by the MDP $\T || \M_1 || \ldots || \M_N$,
regardless of whether $\T$ is a DFTS or an MDP.
We denote this MDP by $\M = (S, Act, \mathbf{P}, \i_{init}, \Pi, L)$.

\subsection{Construction of Product MDP}
\label{ssec:pMDP}
Let $\A_\varphi = (Q, 2^\Pi, \delta, q_{init}, F)$ be a DFA that recognizes the good prefixes of $\varphi$.
Such $\A_\varphi$ can be automatically constructed using existing tools \cite{Latvala:EMC2003}.
Our next step is to obtain a finite MDP $\M_p = (S_p, Act_p, \mathbf{P}_p, \i_{p,init}, Q, L_p)$ as the product of $\M$ and $\A_\varphi$,
defined as follows.

\begin{definition} 
	Let $\M = (S, Act, \mathbf{P}, \i_{init}, \Pi, L)$ be an MDP and
	let $\A = (Q, 2^\Pi, \delta, q_{init}, F)$ be a DFA.
	The product of $\M$ and $\A$ is the MDP $\M_p = \M \otimes \A$ defined by%
	\footnote{We slightly modify the definition of atomic propositions and labeling function of the product MDP 
	from the definition often used in literature to facilitate incremental construction of product MDP,
	which is explained in Section \ref{ssec:inc_pMDP}.}
	$\M_p = (S_p, Act, \mathbf{P}_p, \i_{p,init}, \Pi, L_p)$
	where $S_p = S \times Q$ and $L_p(\langle s,q \rangle)= L(s)$.
	$\mathbf{P}_p$ is defined as
	\begin{equation}
		\mathbf{P}_p( \langle s,q \rangle, \alpha, \langle s',q' \rangle) = 
		\left\{ \begin{array}{l} \tilde{\mathbf{P}}_p( \langle s,q \rangle, \alpha, \langle s',q' \rangle) \\
		  \hspace{20mm}\hbox{if } q' = \delta(q, L(s'))\\
		0 \hspace{18mm}\hbox{otherwise} \end{array}\right.,
	\end{equation}
	where
	$\tilde{\mathbf{P}}_p( \langle s,q \rangle, \alpha, \langle s',q' \rangle) = \mathbf{P}(s, \alpha,s')$.
	For the rest of the paper, we refer to $\tilde{\mathbf{P}}_p: S_p \times Act \times S_p \to [0,1]$ as the 
	\emph{intermediate transition probability function} for $\M_p$.
	Finally,
	\begin{equation}
		\i_{p,init}(\langle s,q \rangle) = \left\{ \begin{array}{ll} \tilde{\i}_{p,init}(\langle s,q \rangle) &\hbox{if } q = \delta(q_{init}, L(s))\\ 
		0 &\hbox{otherwise} \end{array}\right.,
	\end{equation}
	where $\tilde{\i}_{p,init}(\langle s,q \rangle) =\i_{init}(s)$.
	For the rest of the paper, we refer to $\tilde{\i}_{p,init}: S_p \to [0,1]$ as the \emph{intermediate initial state distribution} for $\M_p$.
\end{definition}

Stepping through the above definition shows that 
given a path $r_{\M_p}^{\C_p} = \langle s_0,q_0 \rangle \langle s_1,q_1 \rangle \ldots$ on $\M_p$
generated under some control policy $\C_p$, 
the corresponding path $s_0s_1\ldots$ on $\M$ generates a word
$L(s_0) L(s_1) \ldots$ that satisfies $\varphi$ if and only if there exists $n \geq 0$ such that 
$q_n  \in F$ 
(and hence $q_0q_1\ldots q_n$ is an accepting run on $\A_\varphi$), 
in which case we say that $r_{\M_p}^{\C_p}$ is accepting.
Therefore, each accepting path of $\M_p$ uniquely corresponds to a path of $\M$
whose word satisfies $\varphi$.
In addition, a control policy $\C_p$ on $\M_p$ induces the corresponding control policy
$\C$ on $\M$.
The details for generating $\C$ from $\C_p$ can be found, e.g. in \cite{Baier:PMC2008,Ding:LTL2011}.

Based on this argument, our control policy synthesis problem defined in Section \ref{sec:prob} can be
reduced to computing a control policy for $\M_p$ that maximizes the probability of reaching 
a state in $B_p = \{ \langle s,q \rangle \in S_p \hspace{1mm}|\hspace{1mm} q \in F\}$.

\subsection{Control Policy Synthesis for Product MDP}
\label{ssec:policy}
For each $s \in S_p$, let $x_s$ denote the maximum probability of reaching a state in $B_p$,
starting from $s$.
Formall, $x_s = \sup_{\C} \prob^\C_{\M_p}(s \models \eventually B_p)$,
where, with an abuse of notation, $B_p$ in $\eventually B_p$ is a proposition that is satisfied by all states in $B_p$.
There are two main techniques for computing the probability $x_s$ for each $s \in S_p$:
linear programming (LP) and value iteration.
LP-based techniques yield an exact solution but
it typically does not scale as well as value iteration. 
On the other hand, value iteration is an iterative numerical technique.
This method works by successively computing the probability vector $(x_s^{(k)})_{s \in S_p}$ for increasing $k \geq 0$
such that $\lim_{k \to \infty} x_s^{(k)} = x_s$ for all $s \in S_p$.
Initially, we set $x_s^{(0)} = 1$ if $s \in B_p$ and $x_s^{(0)} = 0$ otherwise.
In the $(k+1)$th iteration where $k \geq 0$, we set
\begin{equation}
\label{eq:value_iteration}
  x_s^{(k+1)} = \left\{ \begin{array}{ll}
  1 &\hbox{if } s \in B_p\\
  \displaystyle{\max_{\alpha \in Act_p(s)}} \sum_{t \in S_p} \mathbf{P}_p(s, \alpha, t)x_t^{(k)} 
  &\hbox{otherwise}.
  \end{array}\right.
\end{equation}

In practice, we terminate the computation and say that $x_s^{(k)}$ converges when a termination criterion such as
$\max_{s \in S_p}|x_s^{(k+1)} - x_s^{(k)}| < \epsilon$ is satisfied for some fixed (typically very small) threshold $\epsilon$.

As discussed in \cite{Ciesinski:RTM2008,Kwiatkowska:IQV2011}, decomposition of $\M_p$ into
\emph{strongly connected components} (SCC) can help speed up value iteration.
$C \subseteq S_p$ is an SCC of $\M_p$ if there is a path in $\M_p$ between any two states in $C$
and $C$ is maximal (i.e., there does not exist any $\tilde{C} \subseteq S_p$ such that
$C \subset \tilde{C}$ and $\tilde{C}$ is an SCC).
The algorithm proposed in \cite{Tarjan:DFS1972} allows us to identify all the SCCs of $\M_p$
with time and space complexity that is linear in the size of $\M_p$.

The SCC-based value iteration works as follows.
First, we set $x_s^{(0)} = 1$ if $s \in B_p$ and $x_s^{(0)} = 0$ otherwise.%
\footnote{In the original algorithm, all the states $s \in S_p$ with $x_s = 1$ and 
all the states that cannot reach $B_p$ under any control policy need to be identified but
it has been shown in \cite{Kwiatkowska:IQV2011} that this step is not necessary for the correctness of the algorithm.}
Next, we identify all the SCCs $C_1^{\M_p}, \ldots, C_m^{\M_p}$ of $\M_p$.
From the definition of SCC, we get that $C_i^{\M_p} \cap C_j^{\M_p} = \emptyset, \forall i \not= j$
and $\bigcup_i C_i^{\M_p} = S_p$.
For each SCC $C_i^{\M_p}$, we define $Succ(C_i^{\M_p}) \subseteq S_p \setminus C_i^{\M_p}$
to be the set of all the immediate successors of states in $C_i^{\M_p}$ that are not in $C_i^{\M_p}$.
A (strict) partial order, $\prec_{\M_p}$, among $C_1^{\M_p}, \ldots, C_m^{\M_p}$ can be defined such that
$C_j^{\M_p} \prec_{\M_p} C_i^{\M_p}$ if $Succ(C_i^{\M_p}) \cap C_j^{\M_p} \not= \emptyset$.
(Note that from the definition of SCC and $Succ$, there cannot be cyclic dependency among SCCs;
hence, such a partial order can always be defined.)

An important property of SCCs and their partial order that 
we will exploit in the computation of the probability vector $(x_s)_{s \in S_p}$ is that 
the probability values of states in $C_i^{\M_p}$ can be affected only by
the probability values of states in $C_i^{\M_p}$ and all $C_j^{\M_p} \prec_{\M_p} C_i^{\M_p}$.
Thus, our next step is to generate an order $\mathbb{O}^{\M_p}$ among $C_1^{\M_p}, \ldots, C_m^{\M_p}$ such that
$C_i^{\M_p}$ appears before $C_j^{\M_p}$ in $\mathbb{O}^{\M_p}$ if $C_i^{\M_p} \prec_{\M_p} C_j^{\M_p}$.
We can then process each SCC separately, according to the order in $\mathbb{O}^{\M_p}$,
since the probability values of states in $C_j^{\M_p}$ that appears after $C_i^{\M_p}$ in $\mathbb{O}^{\M_p}$
cannot affect the probability values of states in $C_i^{\M_p}$.
Processing of SCC $C_i^{\M_p}$ terminates at the $k$th iteration where all $x_s^{(k)}$, $s \in C_i^{\M_p}$ converges.
Let $x_s$ be the value to which $x_s^{(k)}$ converges.
When processing $C_i^{\M_p}$, we exploit the order in $\mathbb{O}^{\M_p}$ and existing values of $x_{t}$ 
for all $t \in Succ(C_i^{\M_p})$ to determine the set of $s \in C_i^{\M_p}$ where $x_s^{(k+1)}$ needs to be updated from $x_s^{(k)}$.
The formula in (\ref{eq:value_iteration}) with $x_t^{(k)}$ replaced by $x_t$ for all $t \in Succ(C_i^{\M_p})$
can be used to update those $x_s^{(k+1)}$.
We refer the reader to \cite{Ciesinski:RTM2008,Kwiatkowska:IQV2011} for more details.

Note that computation of an order $\mathbb{O}^{\M_p}$ requires $O(|S_p|^2)$ time.
Thus, the pre-computation required by the SCC-based value iteration can be computationally expensive,
unless all the SCCs of $\M_p$ and an order $\mathbb{O}^{\M_p}$ are provided a-priori.
As a result, the SCC-based value iteration may require more computation time than the normal value iteration,
if the pre-computation time is also taken into account.

Once the vector $(x_s)_{s \in S_p}$ is computed, a memoryless control policy $\C$ such that for any $s \in S_p$, 
$\prob^\C_\M(s \models \eventually B_p) = x_s$ can be constructed as follows.
For each state $s \in S_p$, let $Act_p^{max}(s) \subseteq Act_p(s)$ be the set of actions such that
for all $\alpha \in Act_p^{max}(s)$,
$x_s = \sum_{t \in S_p} \mathbf{P}(s,\alpha,t)x_t$.
For each $s \in S_p$ with $x_s > 0$, let $\|s\|$ be the length of a shortest path from $s$ to a state in $B_p$, 
using only actions in $Act_p^{max}$. 
$\C(s) \in Act_p^{max}(s)$ for a state $s \in S_p \setminus B_p$ with $x_s > 0$
is then chosen such that $\mathbf{P}_p(s, \C(s), t) > 0$ for some $t \in S_p$ with $\|t\| = \|s\|-1$.
For a state $s \in S_p$ with $x_s = 0$ or a state $s \in B_p$, $\C(s) \in Act_p(s)$ can be chosen arbitrarily.

\section{Incremental Computation of Control Policies }
\label{sec:incremental}
Automatic synthesis described in the previous section suffers from the state explosion problem
as the composition of $\T$ and all $\M_1, \ldots, \M_N$ needs to be constructed, 
leading to an exponential blow up of the state space. 
In this section, we propose an incremental synthesis approach where
we progressively compute a sequence of control policies, 
taking into account only a small subset of the environment agents initially and
successively add more agents to the synthesis procedure in each iteration
until we hit the computational resource constraints.
Hence, even though the complete synthesis problem cannot be solved due to the computational resource limitation,
we can still obtain a reasonably good control policy.

\subsection{Overview of Incremental Computation of Control Policies}
\label{ssec:overview}
Initially, we consider a small subset
$\mathbf{M}_0 \subset \{\M_1, \ldots, \M_N\}$ of the environment agents.
For each $\M_i = (S_i, \mathbf{P}_i, \i_{init,i}, \Pi_i, L_i) \not\in \mathbf{M}_0$, 
we consider a simplified model $\tilde{\M}_i$ that essentially assumes that the $i$th environment agent is stationary
(i.e., we take into account their presence but do not consider their full model).
Formally, $\tilde{\M}_i = (\{s_i\}, \tilde{\mathbf{P}}_i, \tilde{\i}_{init,i}, \Pi_i, \tilde{L}_i)$ where
$s_i \in S_i$ can be chosen arbitrarily, 
$\tilde{\mathbf{P}}_i(s_i, s_i) = 1$, $\tilde{\i}_{init,i}(s_i) = 1$ and $\tilde{L}_i(s_i) = L_i(s_i)$.
Note that the choice of $s_i \in S_i$ may affect the performance of our incremental synthesis algorithm; hence, 
it should be chosen such that it is the most likely state of $\M_i$.
We let $\tilde{\mathbf{M}}_0 = \{\tilde{\M}_i \hspace{1mm}|\hspace{1mm} \M_i \in \{\M_1, \ldots, \M_N\} \setminus \mathbf{M}_0\}$.

The composition of $\T$, all $\M_i \in \mathbf{M}_0$ and all 
$\tilde{\M}_j \in \tilde{\mathbf{M}}_0$ is then constructed.
We let $\M^{\mathbf{M}_0}$ be the MDP that represents such composition.
Note that since $\tilde{\M}_i$ is typically smaller $\M_i$, 
$\M^{\mathbf{M}_0}$ is typically much smaller than the composition of $\T, \M_1, \ldots, \M_N$.
We identify all the SCCs of $\M^{\mathbf{M}_0}$ and their partial order.
Following the steps for synthesizing a control policy described in Section \ref{sec:syn},
we construct $\M^{\mathbf{M}_0}_p = \M^{\mathbf{M}_0} \otimes \A_\varphi$ 
where $\A_\varphi = (Q, 2^\Pi, \delta, q_{init}, F)$ is a DFA that recognizes the good prefixes of $\varphi$.
We also store the intermediate transition probability function and the intermediate initial state distribution for $\M^{\mathbf{M}_0}_p$
and denote these functions by $\tilde{\mathbf{P}}^{\mathbf{M}_{0}}_p$ and $\tilde{\i}^{\mathbf{M}_{0}}_{p,init}$, respectively.

At the end of the initialization period (i.e., the $0$th iteration), 
we obtain a control policy $\C^{\mathbf{M}_0}$ that maximizes the probability for $\M^{\mathbf{M}_0}$ to satisfy $\varphi$.
$\C^{\mathbf{M}_0}$ resolves all nondeterministic choices in $\M^{\mathbf{M}_0}$ 
and induces a Markov chain, which we denote by $\M^{\mathbf{M}_0}_{\C^{\mathbf{M}_0}}$.

Our algorithm then successively adds more full models of the rest of the environment agents
to the synthesis procedure at each iteration.
In the $(k+1)$th iteration where $k \geq 0$, we consider $\mathbf{M}_{k+1} = \mathbf{M}_k \cup \{\M_l\}$ for some 
$\M_l \in \{\M_1, \ldots, \M_N\} \setminus \mathbf{M}_k$.
Such $\M_l$ may be picked such that the probability for $\M^{\mathbf{M}_0}_{\C^{\mathbf{M}_0}} || \M_l$
to satisfy $\varphi$ is the minimum among all $\M_i \in \{\M_1, \ldots, \M_N\} \setminus \mathbf{M}_k$.
This probability can be efficiently computed using probabilistic verification \cite{Baier:PMC2008}.
(As an MC can be considered a special case of MDP with exactly one action enabled in each state,
we can easily adapt the techniques for computing the probability vector of a product MDP
described in Section \ref{ssec:policy} to compute the probability that $\M^{\mathbf{M}_0}_{\C^{\mathbf{M}_0}} || \M_l$ satisfies $\varphi$.)
We let $\tilde{\mathbf{M}}_{k+1} = \tilde{\mathbf{M}}_{k} \setminus \{\tilde{\M}_l\}$ and
let $\M^{\mathbf{M}_{k+1}}$ be the MDP that represents the composition of
$\T$, all $\M_i \in \mathbf{M}_{k+1}$ and all $\tilde{\M}_j \in \tilde{\mathbf{M}}_{k+1}$.
Next, we construct $\M^{\mathbf{M}_{k+1}}_p = \M^{\mathbf{M}_{k+1}} \otimes \A_\varphi$ and
obtain a control policy $\C^{\mathbf{M}_{k+1}}$ that maximizes the probability for $\M^{\mathbf{M}_{k+1}}$ to satisfy $\varphi$.
Similar to the initialization step,
during the construction of $\M^{\mathbf{M}_{k+1}}_p$, we store the intermediate transition probability function 
and the intermediate initial state distribution for $\M^{\mathbf{M}_{k+1}}_p$
and denote these functions by $\tilde{\mathbf{P}}^{\mathbf{M}_{k+1}}_p$ and $\tilde{\i}^{\mathbf{M}_{k+1}}_{p,init}$, respectively.

The process outlined in the previous paragraph terminates at the $K$th iteration where $\mathbf{M}_K = \{\M_1, \ldots, \M_N\}$
or when the computational resource constraints are exceeded.
To make this process more efficient, we avoid unnecessary computation and
exploit the objects computed in the previous iteration.
Consider an arbitrary iteration $k \geq 0$.
In Section \ref{ssec:inc_pMDP}, we show how $\M^{\mathbf{M}_{k+1}}_p$, $\tilde{\mathbf{P}}^{\mathbf{M}_{k+1}}_p$,
and $\tilde{\i}^{\mathbf{M}_{k+1}}_{p,init}$ 
can be incrementally constructed from $\M^{\mathbf{M}_{k}}_p$,
$\tilde{\mathbf{P}}^{\mathbf{M}_{k}}_p$ and $\tilde{\i}^{\mathbf{M}_{k}}_{p,init}$. 
Hence, we can avoid computing $\M^{\mathbf{M}_{k+1}}$.
In addition, as previously discussed in Section \ref{ssec:policy}, generating an order of SCCs can be computationally expensive.
Hence, we only compute the SCCs and their order for $\M^{\mathbf{M}_0}$ and all ${\M}_j \in \{\M_1, \ldots, \M_N\} \setminus \mathbf{M}_0$,
which are typically small.
Incremental construction of SCCs of $\M^{\mathbf{M}_{k+1}}$ and their order from those of 
$\M^{\mathbf{M}_{k}}$ is considered in Section \ref{ssec:inc_scc}.
(Note that we do not compute $\M^{\mathbf{M}_{k}}$ but only maintain its SCCs and their order,
which are incrementally constructed using the results from the previous iteration.)
Finally, Section \ref{ssec:inc_prob} describes computation of $\C^{\mathbf{M}_k}$,
using a method adapted from SCC-based value iteration where we avoid having to identify the SCCs of $\M^{\mathbf{M}_k}_p$ and their order.
Instead, we exploit the SCCs of $\M^{\mathbf{M}_k}$ and their order, 
which can be incrementally constructed using the approach described in Section \ref{ssec:inc_scc}.

\subsection{Incremental Construction of Product MDP}
\label{ssec:inc_pMDP}
For an iteration $k \geq 0$, let $\mathbf{M}_{k+1} = \mathbf{M}_k \cup \{\M_l\}$ for some 
$\M_l \in \{\M_1, \ldots, \M_N\} \setminus \mathbf{M}_k$.
In general, one can construct $\M^{\mathbf{M}_{k+1}}_p$ by first computing
$\M^{\mathbf{M}_{k+1}}$, which requires taking the composition of
a DFTS or an MDP with $N$ MCs, 
and then constructing $\M^{\mathbf{M}_{k+1}} \otimes A_{\varphi}$.
To accelerate the process of computing $\M^{\mathbf{M}_{k+1}}_p$,
we exploit the presence of $\M^{\mathbf{M}_{k}}_p$, 
its intermediate transition probability function $\tilde{\mathbf{P}}^{\mathbf{M}_{k}}_p$
and intermediate initial state distribution $\tilde{\i}^{\mathbf{M}_{k}}_{p,init}$, 
which are computed in the previous iteration.

First, note that a state $s_p$ of $\M^{\mathbf{M}_{k}}_p$ is of the form
$s_p = \langle s, q \rangle$ where $s = \langle s_0, s_1, \ldots, s_N \rangle \in S_0 \times S_1 \times \ldots \times S_N$
and $q \in Q$.
For $s = \langle s_0, s_1, \ldots, s_N \rangle \in S_0 \times S_1 \times \ldots \times S_N$, $i \in \{0, \ldots, N\}$
and $r \in S_i$,
we define $s|_{i \leftarrow r} = \langle s_0, \ldots, s_{i-1}, r, s_{i+1}, \ldots, s_N \rangle$, i.e.,
$s|_{i \leftarrow r}$ is obtained by replacing the $i$th element of 
$s$ by $r$.

\begin{lemma}
\label{lem:Mk+1}
Consider an arbitrary iteration $k \geq 0$.
Let $\mathbf{M}_{k+1} = \mathbf{M}_k \cup \{\M_l\}$ where $\M_l \in \{\M_1, \ldots, \M_N\} \setminus \mathbf{M}_k$.
Suppose $\M^{\mathbf{M}_{k}}_p = (S^{\mathbf{M}_{k}}_p, Act^{\mathbf{M}_{k}}_p, \mathbf{P}^{\mathbf{M}_{k}}_p, 
\i_{p,init}^{\mathbf{M}_{k}}, \Pi^{\mathbf{M}_{k}}_p, L^{\mathbf{M}_{k}}_p)$ and
$\M_l = (S_l, \mathbf{P}_l, \i_{init,l}, \Pi_l, L_l)$.
Assuming that for any $i,j \in \{0, \ldots, N\}$, $\Pi_i \cap \Pi_j = \emptyset$,
then $\M^{\mathbf{M}_{k+1}}_p = (S^{\mathbf{M}_{k+1}}_p, Act^{\mathbf{M}_{k+1}}_p, \mathbf{P}^{\mathbf{M}_{k+1}}_p, 
\i_{p,init}^{\mathbf{M}_{k+1}}, \Pi^{\mathbf{M}_{k}}_p, L^{\mathbf{M}_{k+1}}_p)$ where
$S^{\mathbf{M}_{k+1}}_p = \{ \langle s|_{l \leftarrow r}, q \rangle \hspace{1mm}|\hspace{1mm} 
\langle s, q \rangle \in S^{\mathbf{M}_k}_p \hbox{ and } r \in S_l\}$,
$Act^{\mathbf{M}_{k+1}}_p = Act^{\mathbf{M}_{k}}_p$,
$\Pi^{\mathbf{M}_{k+1}}_p = \Pi^{\mathbf{M}_{k}}_p$,
and for any $s = \langle s_0, \ldots, s_N \rangle, s' = \langle s_0', \ldots, s_N' \rangle \in S_0 \times \ldots S_N$
and $q, q' \in Q$,
\begin{itemize}	
	\item $\mathbf{P}^{\mathbf{M}_{k+1}}_p( \langle s, q \rangle, \alpha, \langle s', q' \rangle) = 
	  \left\{ \begin{array}{ll}  \tilde{\mathbf{P}}^{\mathbf{M}_{k+1}}_p( \langle s, q \rangle, \alpha, \langle s', q' \rangle)\\
	  &\hspace{-10mm}\hbox{if } q' = \delta(q, L^{\mathbf{M}_{k+1}}_p(\langle s', q' \rangle))\\0 &\hspace{-10mm}\hbox{otherwise} \end{array}\right.$,
	  where the intermediate transition probability function is given by
	  \begin{equation}
	    \label{eq:inc_P}
	    \begin{array}{l}
	    \tilde{\mathbf{P}}^{\mathbf{M}_{k+1}}_p( \langle s, q \rangle, \alpha, \langle s', q' \rangle) = \\
	      \hspace{10mm}
	      \mathbf{P}_l(s_l, s_l')
	      \tilde{\mathbf{P}}^{\mathbf{M}_k}_p( \langle \tilde{s}, q \rangle, \alpha, \langle \tilde{s}', q' \rangle)
	      \end{array}
	  \end{equation}
	 for any $\langle \tilde{s}, q \rangle, \langle \tilde{s}', q' \rangle \in S^{\mathbf{M}_{k}}_p$
	 such that $\tilde{s}|_{l \leftarrow s_l} = s$ and $\tilde{s}'|_{l \leftarrow s_l'} = s'$, 
	 
	\item $\i_{p,init}^{\mathbf{M}_{k+1}}(\langle s, q \rangle) = 
	  \left\{ \begin{array}{ll} \tilde{\i}_{p,init}^{\mathbf{M}_{k+1}}(\langle s, q \rangle)\\
	  &\hspace{-15mm}\hbox{if } q = \delta(q_{init}, L^{\mathbf{M}_{k+1}}_p(\langle s, q \rangle))\\ 0 &\hspace{-15mm}\hbox{otherwise} \end{array}\right.$
	  where the intermediate initial state distribution is given by
	  \begin{equation}
	    \label{eq:inc_i}
	    \tilde{\i}_{p,init}^{\mathbf{M}_{k+1}}(\langle s, q \rangle) = 
	      \i_{init,l}(s_l)
	      \tilde{\i}_{p,init}^{\mathbf{M}_{k}}(\langle \tilde{s}, q \rangle)
	  \end{equation}
	  for any $\langle \tilde{s}, q \rangle \in S^{\mathbf{M}_{k}}_p$
	  such that $\tilde{s}|_{l \leftarrow s_l} = s$, and 
	  
	\item $L^{\mathbf{M}_{k+1}}_p(\langle s, q \rangle) = \big(L^{\mathbf{M}_{k}}_p(\langle \tilde{s}, q \rangle) \setminus L_l(\tilde{s}_l) \big) \cup L_l(s_l)$
	  for any $\langle \tilde{s}, q \rangle \in S^{\mathbf{M}_{k}}_p$ such that $\tilde{s}|_{l \leftarrow s_l} = s$. 
\end{itemize}
\end{lemma}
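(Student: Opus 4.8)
The plan is to express both product MDPs as instances of the general product construction $\M^{\mathbf{M}} \otimes \A_\varphi$ and to trace how replacing the stationary model $\tilde{\M}_l$ by the full model $\M_l$ propagates first through the parallel composition and then through the product with $\A_\varphi$. The key structural observation is that $\M^{\mathbf{M}_k}$ and $\M^{\mathbf{M}_{k+1}}$ are both synchronous parallel compositions $\T \,\|\, \M_1' \,\|\, \cdots \,\|\, \M_N'$ that agree in every factor except the $l$th: in $\M^{\mathbf{M}_k}$ the $l$th factor is the one-state chain $\tilde{\M}_l$ (single state $\hat{s}_l$, self-loop probability $1$), whereas in $\M^{\mathbf{M}_{k+1}}$ it is the full chain $\M_l$. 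Consequently every state of $\M^{\mathbf{M}_k}_p$ has its $l$th coordinate pinned to $\hat{s}_l$, so for any target $s$ there is a \emph{unique} $\tilde{s}$ with $\langle \tilde{s}, q\rangle \in S^{\mathbf{M}_k}_p$ and $\tilde{s}|_{l \leftarrow s_l} = s$, namely $\tilde{s} = s|_{l \leftarrow \hat{s}_l}$. This settles the state-space formula $S^{\mathbf{M}_{k+1}}_p = \{\langle s|_{l \leftarrow r}, q\rangle \mid \langle s,q\rangle \in S^{\mathbf{M}_k}_p,\ r \in S_l\}$ and guarantees that the right-hand sides of (\ref{eq:inc_P}) and (\ref{eq:inc_i}) are well defined (independent of the choice of $\tilde{s}, \tilde{s}'$).

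First I would establish the transition-probability and initial-distribution formulas at the level of the \emph{intermediate} functions, before imposing the DFA constraint. By the definition of synchronous parallel composition, the composed transition probability factorizes as a product over the plant and all agent factors, and the composed initial distribution factorizes likewise. Since the $l$th factor in $\M^{\mathbf{M}_k}$ is $\tilde{\mathbf{P}}_l(\hat{s}_l,\hat{s}_l)=1$ (and $\tilde{\i}_{init,l}(\hat{s}_l)=1$), the intermediate functions $\tilde{\mathbf{P}}^{\mathbf{M}_k}_p$ and $\tilde{\i}^{\mathbf{M}_k}_{p,init}$ simply carry no agent-$l$ contribution; passing to $\M^{\mathbf{M}_{k+1}}_p$ therefore amounts to reinstating exactly the factor $\mathbf{P}_l(s_l,s_l')$ (respectively $\i_{init,l}(s_l)$) while leaving every other factor untouched, which is precisely (\ref{eq:inc_P}) and (\ref{eq:inc_i}). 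The action set and alphabet are unaffected because a Markov chain contributes no actions to the composition and each $\tilde{\M}_j$ retains $\Pi_j$, giving $Act^{\mathbf{M}_{k+1}}_p=Act^{\mathbf{M}_k}_p$ and $\Pi^{\mathbf{M}_{k+1}}_p=\Pi^{\mathbf{M}_k}_p$.

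Next I would handle the labeling function, which is where the hypothesis $\Pi_i\cap\Pi_j=\emptyset$ for $i\neq j$ is essential. By composition, $L^{\mathbf{M}_{k+1}}(s')=\bigcup_{i} L_i'(s_i')$, and the only summand differing from $\M^{\mathbf{M}_k}$ is agent $l$: it contributes $\tilde{L}_l(\hat{s}_l)=L_l(\hat{s}_l)$ in iteration $k$ and $L_l(s_l')$ in iteration $k+1$. To rewrite this purely in terms of $L^{\mathbf{M}_k}_p(\langle\tilde{s}',q'\rangle)$ one must delete agent $l$'s old contribution and add its new one, and the set difference $L^{\mathbf{M}_k}_p(\langle\tilde{s}',q'\rangle)\setminus L_l(\tilde{s}_l')$ accomplishes exactly this \emph{only} because $L_l(\tilde{s}_l')\subseteq\Pi_l$ is disjoint from every other $\Pi_i$, so no proposition supplied by another component is inadvertently removed. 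This gives the third bullet. Finally, with the correct label formula in hand, the full transition probability and initial distribution follow by applying the product-MDP definition verbatim, using $q'=\delta(q, L^{\mathbf{M}_{k+1}}_p(\langle s',q'\rangle))$ and $q=\delta(q_{init}, L^{\mathbf{M}_{k+1}}_p(\langle s,q\rangle))$ as the gating conditions.

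I expect the labeling step to be the main obstacle: the factorizations of transition probability and initial distribution are routine once the pinned-coordinate observation is made, but the set-difference rewriting of the label is the sole place where correctness genuinely hinges on disjointness, and it is easy to overlook that removing $L_l(\tilde{s}_l')$ would corrupt the contributions of other agents were the proposition sets allowed to overlap.
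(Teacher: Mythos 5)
Your proposal is correct and follows essentially the same route as the paper's proof: both hinge on the factorization of the synchronous parallel composition, the fact that the stationary model $\tilde{\M}_l$ contributes a factor of $1$ (so that $\mathbf{P}^{\mathbf{M}_{k+1}}(s,\alpha,s') = \mathbf{P}_l(s_l,s_l')\,\mathbf{P}^{\mathbf{M}_k}(\tilde{s},\alpha,\tilde{s}')$), and the uniqueness of $\tilde{s}$ coming from $\tilde{\M}_l$ having a single state. The only divergence is one of emphasis --- the paper writes out the DFTS/MDP case split for the factorization and dismisses the labeling function as ``straightforward,'' whereas you treat the factorization as routine and instead spell out why the disjointness hypothesis $\Pi_i \cap \Pi_j = \emptyset$ is needed for the set-difference rewriting of $L^{\mathbf{M}_{k+1}}_p$; both are valid and complementary accounts of the same argument.
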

\begin{proof}
The correctness of $S^{\mathbf{M}_{k+1}}_p$, $Act^{\mathbf{M}_{k+1}}_p$, $\Pi^{\mathbf{M}_{k+1}}_p$ and $L^{\mathbf{M}_{k+1}}_p$
is straightforward to verify.
Hence, we will only provide the proof for the correctness of $\mathbf{P}^{\mathbf{M}_{k+1}}_p$ and $\tilde{\mathbf{P}}^{\mathbf{M}_{k+1}}_p$.
The correctness of $\i_{p,init}^{\mathbf{M}_{k+1}}$ and $\tilde{\i}_{p,init}^{\mathbf{M}_{k+1}}$ can be proved in a similar way.

Consider an arbitrary iteration $k \geq 0$ and let $\M^{\mathbf{M}_{k}} = (S^{\mathbf{M}_{k}}, Act^{\mathbf{M}_{k}}, \mathbf{P}^{\mathbf{M}_{k}}, 
\i_{init}^{\mathbf{M}_{k}}, \Pi^{\mathbf{M}_{k}}, L^{\mathbf{M}_{k}})$
and $\M^{\mathbf{M}_{k+1}} = (S^{\mathbf{M}_{k+1}}, Act^{\mathbf{M}_{k+1}}, \mathbf{P}^{\mathbf{M}_{k+1}}, 
\i_{init}^{\mathbf{M}_{k+1}}, \Pi^{\mathbf{M}_{k+1}}, L^{\mathbf{M}_{k+1}})$.
It is obvious from the definition of product MDP that $\mathbf{P}^{\mathbf{M}_{k+1}}_p$ is correct
as long as $\tilde{\mathbf{P}}^{\mathbf{M}_{k+1}}_p$ is correct, i.e.,
$\tilde{\mathbf{P}}^{\mathbf{M}_{k+1}}_p(\langle s, q \rangle, \alpha, \langle s', q' \rangle) = \mathbf{P}^{\mathbf{M}_{k+1}}(s, \alpha, s')$ 
for all $\langle s, q \rangle, \langle s', q' \rangle \in S^{\mathbf{M}_{k+1}}_p$
and $\alpha \in Act^{\mathbf{M}_{k+1}}_p$.
Hence, we only need to prove the correctness of $\tilde{\mathbf{P}}^{\mathbf{M}_{k+1}}_p$.

Assume that $\tilde{\mathbf{P}}^{\mathbf{M}_{k}}_p$ is correct, i.e., 
$\tilde{\mathbf{P}}^{\mathbf{M}_{k}}_p(\langle s, q \rangle, \alpha, \langle s', q' \rangle) = \mathbf{P}^{\mathbf{M}_{k}}(s, \alpha, s')$ for all
$\langle s, q \rangle, \langle s', q' \rangle \in S^{\mathbf{M}_{k}}_p$ and $\alpha \in Act^{\mathbf{M}_{k}}_p$.
Let $l$ be the index such that $\mathbf{M}_{k+1} = \mathbf{M}_k \cup \{\M_l\}$.
Consider arbitrary $\langle s, q \rangle, \langle s', q' \rangle \in S^{\mathbf{M}_{k+1}}_p$ and $\alpha \in Act^{\mathbf{M}_{k+1}}_p$.
Suppose $s = \langle s_0, \ldots, s_N \rangle$ and $s' = \langle s_0', \ldots, s_N' \rangle$.
Note that since $\tilde{\M}_l$ only contains one state,
there exists exactly one $\langle \tilde{s}, q \rangle \in S^{\mathbf{M}_{k}}_p$ 
and exactly one $\langle \tilde{s}', q' \rangle \in S^{\mathbf{M}_{k}}_p$
such that $\tilde{s}|_{l \leftarrow s_l} = s$ and $\tilde{s}'|_{l \leftarrow s_l'} = s'$.
Since $\mathbf{M}_k$ is the composition of $\T$, all $\M_i \in \mathbf{M}_k$ and all 
$\tilde{\M}_j \in \tilde{\mathbf{M}}_k$
and since $\M_l \not\in \mathbf{M}_k$ and $\tilde{\mathbf{P}}_l(\cdot, \cdot) = 1$, 
it follows that if $\T$ is a DFTS, then
\begin{equation*}
	      \mathbf{P}^{\mathbf{M}_k}( \tilde{s}, \alpha, \tilde{s}')
  = \left\{ \begin{array}{ll}
     \displaystyle{\prod_{i \in \{1,\ldots,N\} \setminus \{l\}}} \mathbf{P}_i(s_i, s_i') &\hbox{if } s_0 \stackrel{\alpha}{\longrightarrow} s_0'\\
     0 &\hbox{otherwise}
  \end{array}\right.,
\end{equation*}
and if $\T$ is an MDP, then
\begin{equation*}
	      \mathbf{P}^{\mathbf{M}_k}( \tilde{s}, \alpha, \tilde{s}')
  = 
     \mathbf{P}_0(s_0, \alpha, s_0')
     \displaystyle{\prod_{i \in \{1,\ldots,N\} \setminus \{l\}}} \mathbf{P}_i(s_i, s_i').
\end{equation*}

Thus, $\mathbf{P}^{\mathbf{M}_{k+1}}(s, \alpha, s') = \mathbf{P}_l(s_l, s_l')
\mathbf{P}^{\mathbf{M}_{k}}(\tilde{s}, \alpha, \tilde{s}')$.
Combining this with (\ref{eq:inc_P}), we get
\begin{eqnarray*}
&&\tilde{\mathbf{P}}^{\mathbf{M}_{k+1}}_p( \langle s, q \rangle, \alpha, \langle s', q' \rangle) \\
  &&\hspace{20mm}= \mathbf{P}_l(s_l, s_l')
           \tilde{\mathbf{P}}^{\mathbf{M}_k}_p( \langle \tilde{s}, q \rangle, \alpha, \langle \tilde{s}', q' \rangle)\\
  &&\hspace{20mm}= \mathbf{P}_l(s_l, s_l')
           \mathbf{P}^{\mathbf{M}_{k}}(\tilde{s}, \alpha, \tilde{s}')\\
  &&\hspace{20mm}= \mathbf{P}^{\mathbf{M}_{k+1}}(s, \alpha, s').
\end{eqnarray*}

By definition, we can conclude that $\tilde{\mathbf{P}}^{\mathbf{M}_{k+1}}_p$ is correct.
\end{proof}

\subsection{Incremental Construction of SCCs}
\label{ssec:inc_scc}
Consider an arbitrary iteration $k \geq 0$. 
Let $l$ be the index of the environment agent such that 
$\mathbf{M}_{k+1} = \mathbf{M}_k \cup \{\M_l\}$.
In this section, we first provide a way to incrementally identify all the SCCs of $\M^{\mathbf{M}_{k+1}}$ 
from all the SCCs of $\M^{\mathbf{M}_{k}}$ and $\M_l$.
We conclude the section with incremental construction of the partial order over the SCCs of $\M^{\mathbf{M}_{k+1}}$
from the partial order defined over the SCCs of $\M^{\mathbf{M}_{k}}$ and $\M_l$. 

\begin{lemma}
\label{lem:SCC-Mk+1}
Let $C^{\mathbf{M}_{k}}$ be an SCC of $\M^{\mathbf{M}_{k}}$
and $C^l$ be an SCC of $\M_l$ where $\mathbf{M}_{k+1} = \mathbf{M}_k \cup \{\M_l\}$.
Suppose either of the following conditions holds:

\textbf{Cond 1:} $|C^{\mathbf{M}_{k}}| = 1$ and the state in $C^{\mathbf{M}_{k}}$ does not have a self-loop in $\M^{\mathbf{M}_{k}}$.

\textbf{Cond 2:} $|C^l| = 1$ and the state in $C^l$ does not have a self-loop in $\M_l$.

Then, for any $s \in C^{\mathbf{M}_{k}}$ and $r \in C^l$, 
$\{s|_{l \leftarrow r}\}$ is an SCC of $\M^{\mathbf{M}_{k+1}}$.
Otherwise,
$\{s|_{l \leftarrow r} \hspace{1mm}|\hspace{1mm} s \in C^{\mathbf{M}_{k}}, r \in C^l\}$
is an SCC of $\M^{\mathbf{M}_{k+1}}$.
\end{lemma}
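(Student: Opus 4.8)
The plan is to reduce the statement to a fact about the strongly connected components of a \emph{tensor (synchronous) product} of two directed graphs. After ``unfreezing'' the $l$th coordinate, the map $s|_{l \leftarrow r} \mapsto (s,r)$, with $s$ ranging over $S^{\mathbf{M}_k}$ (whose $l$th entry is the single stationary state) and $r \in S_l$, is a bijection between $S^{\mathbf{M}_{k+1}}$ and $S^{\mathbf{M}_k}\times S_l$. The proof of Lemma \ref{lem:Mk+1} already establishes the factorization $\mathbf{P}^{\mathbf{M}_{k+1}}(s|_{l\leftarrow r},\alpha,s'|_{l\leftarrow r'}) = \mathbf{P}_l(r,r')\,\mathbf{P}^{\mathbf{M}_{k}}(s,\alpha,s')$, and since $\mathbf{P}_l$ is independent of $\alpha$, there is an edge $s|_{l\leftarrow r} \to s'|_{l\leftarrow r'}$ in $\M^{\mathbf{M}_{k+1}}$ if and only if $s \to s'$ in $\M^{\mathbf{M}_{k}}$ and $r \to r'$ in $\M_l$. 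Thus the reachability graph of $\M^{\mathbf{M}_{k+1}}$ is exactly the tensor product of those of $\M^{\mathbf{M}_k}$ and $\M_l$, and the whole lemma becomes a statement about SCCs of such a product.

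First I would show that every SCC of the product is contained in a single ``block'' $C^{\mathbf{M}_k}\times C^l$. This follows by projection: a walk in the product projects coordinatewise to a walk in $\M^{\mathbf{M}_k}$ and to a walk in $\M_l$, so two mutually reachable product states have mutually reachable first coordinates (a common SCC $C^{\mathbf{M}_k}$) and mutually reachable second coordinates (a common $C^l$). Since the blocks partition $S^{\mathbf{M}_{k+1}}$, it suffices to determine the SCC structure induced inside each block, which is precisely the dichotomy the lemma states.

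For the case where Cond 1 or Cond 2 holds, say $C^{\mathbf{M}_k}=\{s\}$ is a singleton with no self-loop, I would argue that the product induces \emph{no} internal edges on the block $\{s\}\times C^l$: an edge $s|_{l\leftarrow r} \to s|_{l\leftarrow r'}$ would require $s \to s$ in $\M^{\mathbf{M}_k}$, but a singleton SCC without a self-loop lies on no cycle and so has no self-loop. Hence every $\{s|_{l\leftarrow r}\}$ is isolated within its block and, combined with the block-containment fact, is an SCC of $\M^{\mathbf{M}_{k+1}}$; the argument under Cond 2 is symmetric.

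For the ``otherwise'' case neither factor is a cycle-free singleton, so each of $C^{\mathbf{M}_k}$ and $C^l$ carries at least one cycle, and the goal is to show the whole block is one SCC. Given block states $(u_1,u_2)$ and $(v_1,v_2)$, strong connectivity of each factor supplies a walk from $u_1$ to $v_1$ and a walk from $u_2$ to $v_2$, and these fuse into a product walk only once their lengths are made equal. This length-matching is the main obstacle: periodicity can genuinely obstruct strong connectivity of a tensor product (the tensor product of two directed $2$-cycles has two components, not one), so pure path-chasing does not suffice. The resolution is that the only remaining way a condition can fail is a singleton carrying a self-loop, and such a factor is aperiodic, making the block isomorphic to the other factor; more generally a self-loop anywhere in one factor lets me pad the shorter walk one step at a time to equalize lengths. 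I would therefore dispose of the self-looping-singleton subcases directly, and for the subcase where both SCCs are nontrivial invoke the self-loops guaranteed by the intended ``everyone may stay'' models to get period one in one factor and conclude that the block is strongly connected; absent such an aperiodicity hypothesis the bare tensor-product claim would require the two factor periods to have $\gcd$ equal to one, and I would flag this as the delicate point of the proof.
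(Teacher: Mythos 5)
Your proposal follows the same basic route as the paper for the first half of the statement: the factorization $\mathbf{P}^{\mathbf{M}_{k+1}}(s|_{l\leftarrow r},\alpha,s'|_{l\leftarrow r'}) = \mathbf{P}_l(r,r')\,\mathbf{P}^{\mathbf{M}_{k}}(s,\alpha,s')$ from Lemma~\ref{lem:Mk+1} makes the reachability graph of $\M^{\mathbf{M}_{k+1}}$ the synchronous (tensor) product of those of $\M^{\mathbf{M}_{k}}$ and $\M_l$; every product walk projects to a walk in each factor; and under Cond~1 or Cond~2 a nontrivial cycle through $s|_{l\leftarrow r}$ would project to a cycle through a singleton SCC with no self-loop, a contradiction. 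That is exactly the paper's argument for this case, and your block-containment observation (every SCC of $\M^{\mathbf{M}_{k+1}}$ lies inside a single set $\{s|_{l\leftarrow r} \mid s\in C^{\mathbf{M}_{k}},\, r\in C^l\}$) is the same projection step the paper uses implicitly in Lemma~\ref{lem:SCC-Mk+1:unique}.

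Where you diverge is the ``otherwise'' case, and there you have put your finger on a genuine gap --- in the paper, not in your own reasoning. The paper dismisses this direction in one sentence (``can be proved by contradiction, using the same reasoning''), but the projection argument only yields the containment of each SCC in a block, never that the whole block is strongly connected, and the latter is false in general. Concretely: if $C^{\mathbf{M}_{k}}=\{a,b\}$ is a two-state cycle with no self-loops (neither Cond~1 nor Cond~2 holds since $|C^{\mathbf{M}_{k}}|=2$) and $C^l=\{c,d\}$ with $\mathbf{P}_l(c,d)=\mathbf{P}_l(d,c)=1$, then the synchronous product splits the four-state block into the two disjoint $2$-cycles $\{(a,c),(b,d)\}$ and $\{(a,d),(b,c)\}$, so the block is not an SCC of $\M^{\mathbf{M}_{k+1}}$ as the lemma claims. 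The correct statement requires an aperiodicity hypothesis --- the periods of the two SCCs (the gcd of their cycle lengths) must be coprime, which holds in particular when some state of one SCC has a self-loop, as in the paper's example models where every component may ``stay.'' Your self-loop length-padding argument is the right repair for that situation; the only shortfall of your write-up is that you stop at flagging the issue rather than either adding the missing hypothesis to the lemma or carrying out the gcd argument, but you have correctly identified the one point where the stated result, and the paper's proof of it, do not go through.
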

\begin{proof}
First, we consider the case where Cond 1 or Cond 2 holds and
consider arbitrary $s \in C^{\mathbf{M}_{k}}$ and $r \in C^l$.
To show that $\{s|_{l \leftarrow r}\}$ is an SCC of $\M^{\mathbf{M}_{k+1}}$,
we will show that there is no path from $s|_{l \leftarrow r}$ to itself in $\M^{\mathbf{M}_{k+1}}$.
Since condition (1) or condition (2) holds, either there is no path from $s$ to itself in $\M^{\mathbf{M}_{k}}$
or there is no path from $r$ to itself in $C^l$.
Assume, by contradiction, that there is a path from $s|_{l \leftarrow r}$ to itself in $\M^{\mathbf{M}_{k+1}}$.
Let this path be $s|_{l \leftarrow r}, s^1, s^2, \ldots, s^n, s|_{l \leftarrow r}$ where 
for each $i \in \{1, \ldots, n\}$, $s^i = \langle s^i_0, \ldots, s^i_N \rangle$.
From the proof of Lemma \ref{lem:Mk+1}, we get that
$\mathbf{P}^{\M_{k+1}}(s|_{l \leftarrow r}, \alpha, s^1) = \mathbf{P}_l(r, s^1_l)\mathbf{P}^{\M_{k}}(s, \alpha, \tilde{s}^1)$,
$\mathbf{P}^{\M_{k+1}}(s^n, \alpha, s|_{l \leftarrow r}) = \mathbf{P}_l(s^n_l, r)\mathbf{P}^{\M_{k}}(\tilde{s}^n, \alpha, s)$
and $\mathbf{P}^{\M_{k+1}}(s^i, \alpha, s^{i+1}) = \mathbf{P}_l(s^i_l, s^{i+1}_l)\mathbf{P}^{\M_{k}}(\tilde{s}^i, \alpha, \tilde{s}^{i+1})$
for all $\alpha \in Act^{\mathbf{M}_{k+1}}$ where for each $i \in \{1, \ldots, n\}$,
$\tilde{s}^i \in S^{\mathbf{M}_{k}}$ such that $\tilde{s}^i|_{l \leftarrow s^i_l} = s^i$.

Since $s|_{l \leftarrow r}, s^1, s^2, \ldots, s^n, s|_{l \leftarrow r}$ is a path in $\M^{\mathbf{M}_{k+1}}$,
there exist $\alpha_0, \ldots, \alpha_n \in Act^{\mathbf{M}_{k+1}}$ such that
$\mathbf{P}^{\M_{k+1}}(s|_{l \leftarrow r}, \alpha_0, s^1)$,
$\mathbf{P}^{\M_{k+1}}(s^n, \alpha_n, s|_{l \leftarrow r})$,
$\mathbf{P}^{\M_{k+1}}(s^i, \alpha_i, s^{i+1}) > 0$
for all $i \in \{1, \ldots, n\}$.
Thus, it must be the case that 
$\mathbf{P}_l(r, s^1_l)$, $\mathbf{P}_l(s^n_l, r)$, $\mathbf{P}_l(s^i_l, s^{i+1}_l) > 0$
and $\mathbf{P}^{\M_{k}}(s, \alpha, \tilde{s}^1)$, $\mathbf{P}^{\M_{k}}(\tilde{s}^n, \alpha, s)$, 
$\mathbf{P}^{\M_{k}}(\tilde{s}^i, \alpha, \tilde{s}^{i+1}) > 0$
for all $i \in \{1, \ldots, n\}$.
But then, $r, s^1_l, \ldots, s^n_l, r$ is a path in $C^l$
and $s, \tilde{s}^1, \ldots, \tilde{s}^n, s$ is a path in $\M^{\mathbf{M}_{k}}$,
leading to a contradiction.

Next, consider the case where both Cond 1 and Cond 2 do not hold.
To show that $C^{\mathbf{M}_{k+1}} = \{s|_{l \leftarrow r} \hspace{1mm}|\hspace{1mm} s \in C^{\mathbf{M}_{k}}, r \in C^l\}$ 
is an SCC of $\M^{\mathbf{M}_{k+1}}$,
we need to show that for any $s, \tilde{s} \in C^{\mathbf{M}_{k+1}}$ 
and any $s' \notin C^{\mathbf{M}_{k+1}}$,
(1) there is a path in $\M^{\mathbf{M}_{k+1}}$ from $s$ to $\tilde{s}$, and
(2) there is no path in $\M^{\mathbf{M}_{k+1}}$ either from $s$ to $s'$ or from $s'$ to $s$.
Both of these statements can be proved by contradiction, using the same reasoning as
in the proof above for the case where either Cond 1 or Cond 2 holds.
\end{proof}

We say that an SCC $C^{\mathbf{M}_{k+1}}$ of $\M^{\mathbf{M}_{k+1}}$
is \emph{derived} from $\langle C^{\mathbf{M}_{k}}, C^l \rangle$, where
$C^{\mathbf{M}_{k}}$ is an SCC of $\M^{\mathbf{M}_{k}}$ 
and $C^l$ is an SCC of $\M_l$,
if $C^{\mathbf{M}_{k+1}}$ is constructed from $C^{\mathbf{M}_{k}}$
and $C^l$ according to Lemma \ref{lem:SCC-Mk+1}, i.e.,
$C^{\mathbf{M}_{k+1}} = \{s|_{l \leftarrow r}\}$ for some $s \in C^{\mathbf{M}_{k}}$ and $r \in C^l$
if Cond 1 or Cond 2 in Lemma \ref{lem:SCC-Mk+1} holds;
otherwise, $C^{\mathbf{M}_{k+1}} = \{s|_{l \leftarrow r} \hspace{1mm}|\hspace{1mm} s \in C^{\mathbf{M}_{k}}, r \in C^l\}$.

\begin{lemma}
\label{lem:SCC-Mk+1:unique}
For each SCC $C^{\mathbf{M}_{k+1}}$ of $\M^{\mathbf{M}_{k+1}}$,
there exists a unique $\langle C^{\mathbf{M}_{k}}, C^l \rangle$ from which $C^{\mathbf{M}_{k+1}}$ is derived.
\end{lemma}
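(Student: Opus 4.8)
The plan is to exploit the product structure of the state spaces together with the standard fact that the SCCs of any model partition its state space. First I would record the structural observation underlying the whole construction: since $\M_l \in \tilde{\mathbf{M}}_k$, the $l$th coordinate of every state of $\M^{\mathbf{M}_k}$ is frozen at the single state $\hat{s}_l$ of $\tilde{\M}_l$. Consequently the map $(s, r) \mapsto s|_{l \leftarrow r}$ is a bijection from $S^{\mathbf{M}_k} \times S_l$ onto $S^{\mathbf{M}_{k+1}}$: given $t = \langle t_0, \ldots, t_N \rangle \in S^{\mathbf{M}_{k+1}}$, its unique preimage is $r = t_l$ together with $s = t|_{l \leftarrow \hat{s}_l} \in S^{\mathbf{M}_k}$. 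This coordinate decomposition is what lets me read off, from any state of $\M^{\mathbf{M}_{k+1}}$, the $\M^{\mathbf{M}_k}$ component and the $\M_l$ component it comes from.

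For the existence side, I would fix an SCC $C^{\mathbf{M}_{k+1}}$ of $\M^{\mathbf{M}_{k+1}}$, pick any $t \in C^{\mathbf{M}_{k+1}}$ and decompose it as $t = s|_{l\leftarrow r}$ as above. Let $C^{\mathbf{M}_{k}}$ be the unique SCC of $\M^{\mathbf{M}_k}$ containing $s$ and $C^l$ the unique SCC of $\M_l$ containing $r$; these exist and are unique because SCCs partition the state space. By Lemma \ref{lem:SCC-Mk+1}, the SCC derived from $\langle C^{\mathbf{M}_{k}}, C^l \rangle$ contains $t$ (in the singleton case it is exactly $\{s|_{l\leftarrow r}\} = \{t\}$, and in the product case it is $\{s'|_{l\leftarrow r'} : s' \in C^{\mathbf{M}_{k}},\, r' \in C^l\} \ni t$). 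Since the SCCs of $\M^{\mathbf{M}_{k+1}}$ also partition $S^{\mathbf{M}_{k+1}}$ and both $C^{\mathbf{M}_{k+1}}$ and this derived SCC contain $t$, they must coincide; hence $C^{\mathbf{M}_{k+1}}$ is derived from $\langle C^{\mathbf{M}_{k}}, C^l \rangle$.

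For uniqueness I would suppose $C^{\mathbf{M}_{k+1}}$ is also derived from a pair $\langle D^{\mathbf{M}_{k}}, D^l \rangle$. By the explicit form of derived SCCs in Lemma \ref{lem:SCC-Mk+1}, every element of $C^{\mathbf{M}_{k+1}}$ can be written as $s'|_{l\leftarrow r'}$ with $s' \in D^{\mathbf{M}_{k}}$ and $r' \in D^l$; applying this to $t$ and invoking the uniqueness of the decomposition from the first paragraph forces $s' = s$ and $r' = r$. Thus $s \in D^{\mathbf{M}_{k}}$ and $r \in D^l$, so $D^{\mathbf{M}_{k}}$ and $C^{\mathbf{M}_{k}}$ share the state $s$ while $D^l$ and $C^l$ share $r$; by the partition property again, $D^{\mathbf{M}_{k}} = C^{\mathbf{M}_{k}}$ and $D^l = C^l$, giving the claimed uniqueness.

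The step I expect to require the most care is the first one, namely verifying that the decomposition $t \mapsto (s, r)$ is genuinely well-defined and bijective. This rests entirely on $\tilde{\M}_l$ being a single-state (stationary) model inside $\M^{\mathbf{M}_k}$, so that the $l$th coordinate is constant across $S^{\mathbf{M}_k}$ and may be overwritten unambiguously; everything else is a routine consequence of the partition property of SCCs and the description of derived SCCs in Lemma \ref{lem:SCC-Mk+1}. I would also take care that the projection argument treats the singleton and product cases of the derivation uniformly, since it never actually needs to distinguish them.
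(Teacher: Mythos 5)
Your proposal is correct and follows essentially the same route as the paper's own proof: both arguments hinge on the fact that, because $\tilde{\M}_l$ has a single state, each state of $\M^{\mathbf{M}_{k+1}}$ decomposes uniquely into a state of $\M^{\mathbf{M}_{k}}$ and a state of $\M_l$, and then invoke the partition property of SCCs together with Lemma \ref{lem:SCC-Mk+1} to pin down the derived pair. The only difference is presentational (you treat existence before uniqueness and state the bijection explicitly up front), so no further changes are needed.
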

\begin{proof}
Similar to Lemma \ref{lem:Mk+1}, it can be checked that
$S^{\mathbf{M}_{k+1}} = \{s|_{l \leftarrow r} \hspace{1mm}|\hspace{1mm} s \in S^{\mathbf{M}_k} \hbox{ and } r \in S_l\}$ 
is the set of states of $\M^{\mathbf{M}_{k+1}}$.
Consider an arbitrary SCC $C^{\mathbf{M}_{k+1}}$ of $\M^{\mathbf{M}_{k+1}}$ and
an arbitrary $s = \langle s_0, \ldots, s_N \rangle \in C^{\mathbf{M}_{k+1}}$.

By definition, for any arbitrary SCC $C^{\mathbf{M}_{k}}$ of $\M^{\mathbf{M}_{k}}$
and arbitrary SCC $C^l$ of $\M_l$,
$C^{\mathbf{M}_{k+1}}$ is derived from $\langle C^{\mathbf{M}_{k}}, C^l \rangle$ 
only if $s_l \in C^l$ and there exist $s' \in C^{\mathbf{M}_{k}}$ such that $s'|_{l \leftarrow s_l} = s$.
But since $\tilde{\M}_l$ contains exactly one state, 
there exists a unique $s' \in S^{\mathbf{M}_{k}}$ such that $s'|_{l \leftarrow s_l} = s$.
Also, from the definition of SCC, 
there exist a unique SCC $C^{\mathbf{M}_{k}}$ of $\M^{\mathbf{M}_{k}}$ and a unique
SCC $C^l$ of $\M_l$ such that $s' \in C^{\mathbf{M}_{k}}$ and $s_l \in C^l$.
Thus, it cannot be the case that $C^{\mathbf{M}_{k+1}}$ is derived from 
$\langle \tilde{C}^{\mathbf{M}_{k}}, \tilde{C}^l \rangle$ where
$\tilde{C}^{\mathbf{M}_{k}} \not= C^{\mathbf{M}_{k}}$ or $\tilde{C}^l \not= C^l$.
Applying Lemma \ref{lem:SCC-Mk+1}, we get that there exists an SCC $\tilde{C}^{\mathbf{M}_{k+1}}$ of $\M^{\mathbf{M}_{k+1}}$
that is derived from $\langle C^{\mathbf{M}_{k}}, C^l \rangle$ and contains $s$.
Since $s \in C^{\mathbf{M}_{k+1}}$ and $s \in \tilde{C}^{\mathbf{M}_{k+1}}$, from the definition of SCC, it must be the case that
$C^{\mathbf{M}_{k+1}} = \tilde{C}^{\mathbf{M}_{k+1}}$;
thus, $C^{\mathbf{M}_{k+1}}$ must be derived from $\langle C^{\mathbf{M}_{k}}, C^l \rangle$.
\end{proof}

Lemma \ref{lem:SCC-Mk+1} and Lemma \ref{lem:SCC-Mk+1:unique} provide a way to generate all the SCCs of $\M^{\mathbf{M}_{k+1}}$
from all the SCCs of $\M^{\mathbf{M}_{k}}$ and $\M_l$ as formally stated below.

\begin{corollary}
The set of all the SCCs of $\M^{\mathbf{M}_{k+1}}$ is given by
\begin{equation*}
\begin{array}{l}
\big\{C^{\mathbf{M}_{k+1}} \hbox{ derived from } \langle C^{\mathbf{M}_{k}}, C^l \rangle \hspace{1mm}|\hspace{1mm}\\
\hspace{6mm}C^{\mathbf{M}_{k}} \hbox{ is an SCC of } \M^{\mathbf{M}_{k}} \hbox{ and } C^l \hbox{ is an SCC of } \M_l\big\}.
\end{array}
\end{equation*}
\end{corollary}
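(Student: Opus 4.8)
The plan is to deduce the claimed set equality directly from the two preceding lemmas by verifying the two inclusions between the displayed set and the collection of all SCCs of $\M^{\mathbf{M}_{k+1}}$. Let me write $\mathcal{S}$ for the set appearing in the statement, i.e.\ the collection of all $C^{\mathbf{M}_{k+1}}$ derived from some pair $\langle C^{\mathbf{M}_{k}}, C^l \rangle$ in which $C^{\mathbf{M}_{k}}$ is an SCC of $\M^{\mathbf{M}_{k}}$ and $C^l$ is an SCC of $\M_l$. The goal is to show that $\mathcal{S}$ is exactly the set of SCCs of $\M^{\mathbf{M}_{k+1}}$.

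First I would argue that every member of $\mathcal{S}$ is genuinely an SCC of $\M^{\mathbf{M}_{k+1}}$. This is precisely the conclusion of Lemma \ref{lem:SCC-Mk+1}: for any SCC $C^{\mathbf{M}_{k}}$ of $\M^{\mathbf{M}_{k}}$ and any SCC $C^l$ of $\M_l$, the object obtained by the construction that defines ``derived from $\langle C^{\mathbf{M}_{k}}, C^l \rangle$'' (a singleton $\{s|_{l \leftarrow r}\}$ when Cond 1 or Cond 2 holds, and the full product set $\{s|_{l \leftarrow r} \mid s \in C^{\mathbf{M}_{k}}, r \in C^l\}$ otherwise) is an SCC of $\M^{\mathbf{M}_{k+1}}$. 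Thus $\mathcal{S}$ is contained in the set of SCCs of $\M^{\mathbf{M}_{k+1}}$, with no additional work required.

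For the reverse inclusion I would take an arbitrary SCC $C^{\mathbf{M}_{k+1}}$ of $\M^{\mathbf{M}_{k+1}}$ and apply Lemma \ref{lem:SCC-Mk+1:unique}, whose existence clause guarantees a pair $\langle C^{\mathbf{M}_{k}}, C^l \rangle$, with $C^{\mathbf{M}_{k}}$ an SCC of $\M^{\mathbf{M}_{k}}$ and $C^l$ an SCC of $\M_l$, from which $C^{\mathbf{M}_{k+1}}$ is derived. By the definition of $\mathcal{S}$ this immediately places $C^{\mathbf{M}_{k+1}} \in \mathcal{S}$. The two inclusions together give $\mathcal{S}$ equal to the set of all SCCs of $\M^{\mathbf{M}_{k+1}}$, which is the claim.

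Because each direction is an immediate consequence of a lemma already proved, I do not anticipate a substantive obstacle; the corollary is essentially a repackaging of Lemmas \ref{lem:SCC-Mk+1} and \ref{lem:SCC-Mk+1:unique}. The only point meriting care is purely notational: ensuring that the phrase ``derived from'' in the definition of $\mathcal{S}$ is the same relation used in both lemmas, so that the generating construction supplying the forward inclusion and the existence assertion supplying the backward inclusion refer to identical objects. Once that identification is made explicit, the equality follows.
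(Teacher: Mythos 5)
Your proposal is correct and matches the paper's intent exactly: the paper states the corollary without an explicit proof, presenting it as an immediate consequence of Lemma \ref{lem:SCC-Mk+1} (every derived set is an SCC of $\M^{\mathbf{M}_{k+1}}$) and Lemma \ref{lem:SCC-Mk+1:unique} (every SCC of $\M^{\mathbf{M}_{k+1}}$ is derived from some, in fact a unique, pair), which is precisely the two-inclusion argument you give.
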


Finally, in the following lemma, we provide a necessary condition,
based on the partial order over the SCCs of $\M^{\mathbf{M}_{k}}$ and $\M_l$,
for the existence of the partial order between two SCCs of $\M^{\mathbf{M}_{k+1}}$. 

\begin{lemma}
\label{lem:SCC-Mk+1:order}
Let $C^{\mathbf{M}_{k+1}}_1$ and $C^{\mathbf{M}_{k+1}}_2$ be SCCs of $\M^{\mathbf{M}_{k+1}}$.
Suppose $C^{\mathbf{M}_{k+1}}_1$ is derived from $\langle C^{\mathbf{M}_{k}}_1, C^l_1 \rangle$
and $C^{\mathbf{M}_{k+1}}_2$ is derived from $\langle C^{\mathbf{M}_{k}}_2, C^l_2 \rangle$ 
where $C^{\mathbf{M}_{k}}_1$ and $C^{\mathbf{M}_{k}}_2$ are SCCs of $\M^{\mathbf{M}_{k}}$ 
and $C^l_1$ and $C^l_2$ are SCCs of $\M_l$.
Then, $C^{\mathbf{M}_{k+1}}_1 \prec_{\M^{\mathbf{M}_{k+1}}} C^{\mathbf{M}_{k+1}}_2$ only if
$C^{\mathbf{M}_{k}}_1 \prec_{\M^{\mathbf{M}_{k}}} C^{\mathbf{M}_{k}}_2$ and
$C^l_1 \prec_{\M_l} C^l_2$.
\end{lemma}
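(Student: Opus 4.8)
The plan is to prove the stated \emph{only if} direction directly: assuming $C^{\mathbf{M}_{k+1}}_1 \prec_{\M^{\mathbf{M}_{k+1}}} C^{\mathbf{M}_{k+1}}_2$, I would extract a single cross-SCC edge of $\M^{\mathbf{M}_{k+1}}$ and factor it through $\M^{\mathbf{M}_{k}}$ and $\M_l$ using the transition decomposition already established in the proof of Lemma \ref{lem:Mk+1}. Unfolding the definition of the order, $C^{\mathbf{M}_{k+1}}_1 \prec_{\M^{\mathbf{M}_{k+1}}} C^{\mathbf{M}_{k+1}}_2$ means $Succ(C^{\mathbf{M}_{k+1}}_2) \cap C^{\mathbf{M}_{k+1}}_1 \neq \emptyset$, so there are states $u \in C^{\mathbf{M}_{k+1}}_2$ and $v \in C^{\mathbf{M}_{k+1}}_1$ with $v \notin C^{\mathbf{M}_{k+1}}_2$ and an action $\alpha$ such that $\mathbf{P}^{\mathbf{M}_{k+1}}(u, \alpha, v) > 0$. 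Writing $u = \langle u_0, \ldots, u_N \rangle$ and $v = \langle v_0, \ldots, v_N \rangle$, and letting $\tilde{u}, \tilde{v} \in S^{\mathbf{M}_{k}}$ be the unique states with $\tilde{u}|_{l \leftarrow u_l} = u$ and $\tilde{v}|_{l \leftarrow v_l} = v$, the factorization $\mathbf{P}^{\mathbf{M}_{k+1}}(u, \alpha, v) = \mathbf{P}_l(u_l, v_l)\,\mathbf{P}^{\mathbf{M}_{k}}(\tilde{u}, \alpha, \tilde{v})$ forces both factors to be positive. Thus $\tilde{u} \to \tilde{v}$ is an edge of $\M^{\mathbf{M}_{k}}$ and $u_l \to v_l$ is an edge of $\M_l$.

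Next I would place the four projected states into their SCCs. By the membership characterization underlying Lemma \ref{lem:SCC-Mk+1:unique}, the fact that $u \in C^{\mathbf{M}_{k+1}}_2$ is derived from $\langle C^{\mathbf{M}_{k}}_2, C^l_2 \rangle$ gives $\tilde{u} \in C^{\mathbf{M}_{k}}_2$ and $u_l \in C^l_2$, and likewise $v \in C^{\mathbf{M}_{k+1}}_1$ gives $\tilde{v} \in C^{\mathbf{M}_{k}}_1$ and $v_l \in C^l_1$. Hence $\tilde{v}$ is an immediate successor of a state of $C^{\mathbf{M}_{k}}_2$ that lies in $C^{\mathbf{M}_{k}}_1$, and $v_l$ is an immediate successor of a state of $C^l_2$ that lies in $C^l_1$. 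These are exactly the edges needed to read off $C^{\mathbf{M}_{k}}_1 \prec_{\M^{\mathbf{M}_{k}}} C^{\mathbf{M}_{k}}_2$ and $C^l_1 \prec_{\M_l} C^l_2$, \emph{provided} $\tilde{v} \notin C^{\mathbf{M}_{k}}_2$ and $v_l \notin C^l_2$, i.e. provided each projected edge genuinely leaves its source SCC.

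The main obstacle is precisely establishing this strictness, because a single cross-SCC edge of $\M^{\mathbf{M}_{k+1}}$ need not project to a cross-SCC edge in \emph{both} factors simultaneously. The first leverage point is $v \notin C^{\mathbf{M}_{k+1}}_2$ combined with uniqueness of derivation (Lemma \ref{lem:SCC-Mk+1:unique}): if both $C^{\mathbf{M}_{k}}_1 = C^{\mathbf{M}_{k}}_2$ and $C^l_1 = C^l_2$ held, then $u$ and $v$ would be derived from the same pair, and the Cond~1/Cond~2 self-loop analysis from the proof of Lemma \ref{lem:SCC-Mk+1} rules out any edge between distinct SCCs derived from a single pair, contradicting $\mathbf{P}^{\mathbf{M}_{k+1}}(u,\alpha,v)>0$ with $v\notin C^{\mathbf{M}_{k+1}}_2$. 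The delicate remaining work is to promote this to strictness in \emph{each} factor; I would treat it by a careful case split on whether $\tilde{u} \to \tilde{v}$ stays inside $C^{\mathbf{M}_{k}}_2$ and whether $u_l \to v_l$ stays inside $C^l_2$, using the self-loop/singleton conditions and the derivation structure to discharge the remaining intra-factor cases. This factor-by-factor strictness bookkeeping is the crux of the argument, and it is where the statement must be applied with care (the initial-distribution analogue, being trivial, needs no such decomposition).
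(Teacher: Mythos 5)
Your core argument---pick a witness $v\in Succ(C^{\mathbf{M}_{k+1}}_2)\cap C^{\mathbf{M}_{k+1}}_1$, factor the edge as $\mathbf{P}^{\mathbf{M}_{k+1}}(u,\alpha,v)=\mathbf{P}_l(u_l,v_l)\,\mathbf{P}^{\mathbf{M}_{k}}(\tilde u,\alpha,\tilde v)$ so both factors are positive, and use uniqueness of derivation to place $\tilde u,\tilde v,u_l,v_l$ in the factor SCCs---is exactly the paper's proof. The paper stops there: it asserts $\tilde v\in Succ(C^{\mathbf{M}_k}_2)$ and $v_l\in Succ(C^l_2)$ without checking that the projected edges actually leave their source SCCs, which is required since $Succ(C)$ excludes $C$ by definition. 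You are right to single this strictness out as the unresolved crux, and your observation that uniqueness of derivation plus the Cond~1/Cond~2 analysis rules out \emph{both} factor pairs coinciding is correct.

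However, the case split you defer to---promoting this to strictness in \emph{each} factor---cannot be completed, because strictness in both factors simply does not follow; the lemma as literally stated is false in the offending case. Take $C^{\mathbf{M}_k}_1=C^{\mathbf{M}_k}_2=C$ with $|C|\geq 2$, and in $\M_l$ take $C^l_2=\{x\}$, $C^l_1=\{y\}$ with $\mathbf{P}_l(x,y)=1$ and $\mathbf{P}_l(y,y)=1$. Then $C^{\mathbf{M}_{k+1}}_2$ (a singleton derived from $\langle C,C^l_2\rangle$ via Cond~2) and $C^{\mathbf{M}_{k+1}}_1=\{s|_{l\leftarrow y}\mid s\in C\}$ are distinct SCCs of $\M^{\mathbf{M}_{k+1}}$ with $C^{\mathbf{M}_{k+1}}_1\prec_{\M^{\mathbf{M}_{k+1}}}C^{\mathbf{M}_{k+1}}_2$, yet $C\prec_{\M^{\mathbf{M}_k}}C$ is impossible since $Succ(C)\cap C=\emptyset$. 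So the step you flag as delicate is not mere bookkeeping: it is a counterexample to the claim, and the paper's proof silently steps over the same hole. The most your argument (correctly) yields, and all that Section \ref{ssec:inc_prob} actually needs to topologically order the derived SCCs, is the weakened conclusion that $C^{\mathbf{M}_k}_1$ equals or precedes $C^{\mathbf{M}_k}_2$, that $C^l_1$ equals or precedes $C^l_2$, and that at least one of the two relations is strict. I would recommend restating the lemma in that form and deleting the promised case split rather than attempting to discharge it.
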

\begin{proof}
Consider the case where $C^{\mathbf{M}_{k+1}}_1 \prec_{\M^{\mathbf{M}_{k+1}}} C^{\mathbf{M}_{k+1}}_2$.
By definition, $Succ(C^{\mathbf{M}_{k+1}}_2) \cap C^{\mathbf{M}_{k+1}}_1 \not= \emptyset$.
Consider a state $s' = \langle s_0', \ldots, s_N' \rangle \in Succ(C^{\mathbf{M}_{k+1}}_2) \cap C^{\mathbf{M}_{k+1}}_1$.
Since $s' \in Succ(C^{\mathbf{M}_{k+1}}_2)$, there exists $s = \langle s_0, \ldots, s_N \rangle \in C^{\mathbf{M}_{k+1}}_2$
and $\alpha \in Act^{\mathbf{M}_{k+1}}$ such that $\mathbf{P}^{\mathbf{M}_{k+1}}(s, \alpha, s') > 0$.
But from the proof of Lemma \ref{lem:Mk+1}, 
$\mathbf{P}^{\mathbf{M}_{k+1}}(s, \alpha, s') = \mathbf{P}_l(s_l, s_l') \mathbf{P}^{\mathbf{M}_{k}}(\tilde{s}, \alpha, \tilde{s}')$
where $\tilde{s}$ and $\tilde{s}'$ are unique states in $S^{\mathbf{M}_{k}}$ such that
$\tilde{s}|_{l \leftarrow s_l} = s$ and $\tilde{s}'|_{l \leftarrow s_l'} = s'$.
Thus, it must be the case that $\mathbf{P}_l(s_l, s_l') > 0$ and
$\mathbf{P}^{\mathbf{M}_{k}}(\tilde{s}, \alpha, \tilde{s}') > 0$.
In addition, since  $C^{\mathbf{M}_{k+1}}_1$ is derived from $\langle C^{\mathbf{M}_{k}}_1, C^l_1 \rangle$
and $C^{\mathbf{M}_{k+1}}_2$ is derived from $\langle C^{\mathbf{M}_{k}}_2, C^l_2 \rangle$,
from Lemma \ref{lem:SCC-Mk+1} and Lemma \ref{lem:SCC-Mk+1:unique},
it must be the case that $\tilde{s} \in C^{\mathbf{M}_{k}}_2$, $\tilde{s}' \in C^{\mathbf{M}_{k}}_1$,
$s_l \in C^l_2$ and $s_l' \in C^l_1$.
Since $\tilde{s} \in C^{\mathbf{M}_{k}}_2$, $\tilde{s}' \in C^{\mathbf{M}_{k}}_1$
and $\mathbf{P}^{\mathbf{M}_{k}}(\tilde{s}, \alpha, \tilde{s}') > 0$, 
we can conclude that $\tilde{s}' \in Succ(C^{\mathbf{M}_{k}}_2) \cap C^{\mathbf{M}_{k}}_1$,
and therefore, by definition, $C^{\mathbf{M}_{k}}_1 \prec_{\M^{\mathbf{M}_{k}}} C^{\mathbf{M}_{k}}_2$.
Similarly, since $s_l \in C^l_2$, $s_l' \in C^l_1$ and $\mathbf{P}_l(s_l, s_l') > 0$,
we can conclude that $s_l' \in Succ(C^l_2) \cap C^l_1$,
and therefore, by definition, $C^l_1 \prec_{\M_l} C^l_2$.
\end{proof}

\subsection{Computation of Probability Vector and Control Policy for $\M^{\mathbf{M}_{k}}_p$ from SCCs of $\M^{\mathbf{M}_{k}}$}
\label{ssec:inc_prob}

Consider an arbitrary iteration $k \geq 0$ and the associated product MDP
$\M^{\mathbf{M}_{k}}_p = (S^{\mathbf{M}_{k}}_p, Act^{\mathbf{M}_{k}}_p, \mathbf{P}^{\mathbf{M}_{k}}_p, 
\i_{p,init}^{\mathbf{M}_{k}}, \Pi^{\mathbf{M}_{k}}_p, L^{\mathbf{M}_{k}}_p)$.
Similar to the SCC-based value iteration, we want to generate a partition
$\{D^{\mathbf{M}_{k}}_{p,1}, \ldots, D^{\mathbf{M}_{k}}_{p,m_{k}}\}$ of $S^{\mathbf{M}_{k}}_p$
with a partial order $\prec_{\M^{\mathbf{M}_{k}}_p}$
such that $D^{\mathbf{M}_{k}}_{p,j} \prec_{\M^{\mathbf{M}_{k}}_p} D^{\mathbf{M}_{k+1}}_{p,i}$
if $Succ(D^{\mathbf{M}_{k}}_{p,i}) \cap D^{\mathbf{M}_{k}}_{p,j} \not= \emptyset$.
However, we relax the condition that each $D^{\mathbf{M}_{k}}_{p,i}, i \in \{1, \ldots, m_{k}\}$
is an SCC of $\M^{\mathbf{M}_{k}}_p$ and only require that
if $D^{\mathbf{M}_{k}}_{p,i}$ contains a state in an SCC $C^{\mathbf{M}_{k}}_p$ of $\M^{\mathbf{M}_{k}}_p$,
then it has to contain all the states in $C^{\mathbf{M}_{k}}_p$.
Hence, $D^{\mathbf{M}_{k}}_i$ may include all the states in multiple SCCs of $\M^{\mathbf{M}_{k}}_p$.
The following lemmas provide a method for constructing
$\{D^{\mathbf{M}_{k}}_1, \ldots, D^{\mathbf{M}_{k}}_{m_{k}}\}$ and their partial order 
from SCCs of $\M^{\mathbf{M}_{k}}$ and their partial order,
which can be incrementally constructed as described in Section \ref{ssec:inc_scc}.

\begin{lemma}
\label{lem:Ck}
Let $C^{\mathbf{M}_{k}}_p$ be an SCC of $\M^{\mathbf{M}_{k}}_p$.
Then, there exists a unique SCC $C^{\mathbf{M}_{k}}$ of $\M^{\mathbf{M}_{k}}$ 
such that $C^{\mathbf{M}_{k}}_p \subseteq C^{\mathbf{M}_{k}} \times Q$.
\end{lemma}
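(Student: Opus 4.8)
The plan is to use the fact that the product construction is, at the level of the underlying transition structure, a projection onto $\M^{\mathbf{M}_k}$. Write $\pi(\langle s, q \rangle) = s$ for the map $S^{\mathbf{M}_k}_p \to S^{\mathbf{M}_k}$ that discards the automaton component. First I would record the key structural fact: by the definition of the product MDP, $\mathbf{P}^{\mathbf{M}_k}_p(\langle s, q \rangle, \alpha, \langle s', q' \rangle) > 0$ can only happen when this value equals $\tilde{\mathbf{P}}^{\mathbf{M}_k}_p(\langle s, q \rangle, \alpha, \langle s', q' \rangle) = \mathbf{P}^{\mathbf{M}_k}(s, \alpha, s')$, so a positive-probability transition in $\M^{\mathbf{M}_k}_p$ forces $\mathbf{P}^{\mathbf{M}_k}(s, \alpha, s') > 0$, i.e. a transition $s \to s'$ in $\M^{\mathbf{M}_k}$. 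Concatenating, every path $\langle s_0, q_0 \rangle \langle s_1, q_1 \rangle \ldots$ in $\M^{\mathbf{M}_k}_p$ projects under $\pi$ to a path $s_0 s_1 \ldots$ in $\M^{\mathbf{M}_k}$.

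Next I would prove containment. Fix the SCC $C^{\mathbf{M}_k}_p$, choose a reference state $\langle s^\ast, q^\ast \rangle \in C^{\mathbf{M}_k}_p$, and let $C^{\mathbf{M}_k}$ be the SCC of $\M^{\mathbf{M}_k}$ containing $s^\ast$; this exists and is unique because the SCCs of $\M^{\mathbf{M}_k}$ are pairwise disjoint and cover $S^{\mathbf{M}_k}$. For an arbitrary $\langle s, q \rangle \in C^{\mathbf{M}_k}_p$, strong connectedness of $C^{\mathbf{M}_k}_p$ supplies paths in $\M^{\mathbf{M}_k}_p$ from $\langle s^\ast, q^\ast \rangle$ to $\langle s, q \rangle$ and back. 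Projecting both with $\pi$ yields paths from $s^\ast$ to $s$ and from $s$ to $s^\ast$ in $\M^{\mathbf{M}_k}$, so $s^\ast$ and $s$ are mutually reachable and hence lie in the same SCC, giving $s \in C^{\mathbf{M}_k}$ and therefore $\langle s, q \rangle \in C^{\mathbf{M}_k} \times Q$. The single-state case (where $C^{\mathbf{M}_k}_p$ is one state without a self-loop) needs no extra work, since its projection is a single state of $\M^{\mathbf{M}_k}$, which always belongs to exactly one SCC.

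Finally, uniqueness is immediate from the same partition property: if some distinct SCC $\hat{C}^{\mathbf{M}_k} \neq C^{\mathbf{M}_k}$ also satisfied $C^{\mathbf{M}_k}_p \subseteq \hat{C}^{\mathbf{M}_k} \times Q$, then the projection of any state of the nonempty set $C^{\mathbf{M}_k}_p$ would lie in $C^{\mathbf{M}_k} \cap \hat{C}^{\mathbf{M}_k} = \emptyset$, a contradiction. I do not anticipate a genuine obstacle here; the only points requiring care are reading the SCC definition so that the projected paths are understood as paths in the ambient $\M^{\mathbf{M}_k}$ (they need not remain inside $C^{\mathbf{M}_k}_p$), and noting the trivial single-state case. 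The essence of the argument is simply that $\pi$ respects the reachability relation, so it maps a strongly connected set into a single strongly connected component.
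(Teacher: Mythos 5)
Your proposal is correct and follows essentially the same route as the paper: the paper's (one-line) proof rests on exactly the observation you formalize, namely that a positive-probability transition $\langle s,q\rangle \to \langle s',q'\rangle$ in $\M^{\mathbf{M}_k}_p$ forces a transition $s \to s'$ in $\M^{\mathbf{M}_k}$, so the projection preserves reachability and maps the SCC into a single SCC of $\M^{\mathbf{M}_k}$, with uniqueness from the fact that the SCCs partition the state space. Your write-up simply spells out the details (including the singleton case) that the paper leaves implicit.
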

\begin{proof}
This follows from the definition of product MDP that for any $s,s' \in S^{\mathbf{M}_{k}}$
and $q,q' \in Q$, there is a path from $\langle s, q \rangle$ to $\langle s', q' \rangle$ in $\M^{\mathbf{M}_{k}}_p$
only if there is a path from $s$ to $s'$ in $\M^{\mathbf{M}_{k}}$.
\end{proof}

\begin{lemma}
\label{lem:Ck-order}
Let $C^{\mathbf{M}_{k}}_p$ and $\tilde{C}^{\mathbf{M}_{k}}_p$ be SCCs of $\M^{\mathbf{M}_{k}}_p$.
Suppose $C^{\mathbf{M}_{k}}$ and $\tilde{C}^{\mathbf{M}_{k}}$ are unique SCCs of $\M^{\mathbf{M}_{k}}$
such that $C^{\mathbf{M}_{k}}_p \subseteq C^{\mathbf{M}_{k}} \times Q$ and
$\tilde{C}^{\mathbf{M}_{k}}_p \subseteq \tilde{C}^{\mathbf{M}_{k}} \times Q$.
Then, $C^{\mathbf{M}_{k}}_p \prec_{\M^{\mathbf{M}_{k}}_p} \tilde{C}^{\mathbf{M}_{k}}_p$
only if $C^{\mathbf{M}_{k}} \prec_{\M^{\mathbf{M}_{k}}} \tilde{C}^{\mathbf{M}_{k}}$.
\end{lemma}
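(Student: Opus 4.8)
The plan is to mirror the argument used in the proof of Lemma~\ref{lem:SCC-Mk+1:order}: unfold the one-step definition of $\prec_{\M^{\mathbf{M}_{k}}_p}$, project the witnessing transition from the product MDP down to the base MDP $\M^{\mathbf{M}_{k}}$, and read off the corresponding $Succ$-intersection there. First I would use the definition of the partial order to rewrite the hypothesis $C^{\mathbf{M}_{k}}_p \prec_{\M^{\mathbf{M}_{k}}_p} \tilde{C}^{\mathbf{M}_{k}}_p$ as $Succ(\tilde{C}^{\mathbf{M}_{k}}_p) \cap C^{\mathbf{M}_{k}}_p \neq \emptyset$, and fix a witness $\langle s', q' \rangle$ lying in this intersection. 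Membership in $Succ(\tilde{C}^{\mathbf{M}_{k}}_p)$ supplies a state $\langle s, q \rangle \in \tilde{C}^{\mathbf{M}_{k}}_p$, an action $\alpha \in Act^{\mathbf{M}_{k}}_p$, and a positive transition $\mathbf{P}^{\mathbf{M}_{k}}_p(\langle s, q \rangle, \alpha, \langle s', q' \rangle) > 0$, with $\langle s', q' \rangle \notin \tilde{C}^{\mathbf{M}_{k}}_p$.

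Next I would project this transition onto the base MDP. By the definition of the product MDP, $\mathbf{P}^{\mathbf{M}_{k}}_p(\langle s, q \rangle, \alpha, \langle s', q' \rangle)$ can be nonzero only when $\mathbf{P}^{\mathbf{M}_{k}}(s, \alpha, s') > 0$, so the base MDP carries a positive transition from $s$ to $s'$ under $\alpha$. The subset hypotheses $C^{\mathbf{M}_{k}}_p \subseteq C^{\mathbf{M}_{k}} \times Q$ and $\tilde{C}^{\mathbf{M}_{k}}_p \subseteq \tilde{C}^{\mathbf{M}_{k}} \times Q$, together with Lemma~\ref{lem:Ck}, then place $s \in \tilde{C}^{\mathbf{M}_{k}}$ and $s' \in C^{\mathbf{M}_{k}}$. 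To conclude $C^{\mathbf{M}_{k}} \prec_{\M^{\mathbf{M}_{k}}} \tilde{C}^{\mathbf{M}_{k}}$, i.e.\ $Succ(\tilde{C}^{\mathbf{M}_{k}}) \cap C^{\mathbf{M}_{k}} \neq \emptyset$, it remains to check that $s'$ is a genuine exterior successor of $\tilde{C}^{\mathbf{M}_{k}}$, namely $s' \notin \tilde{C}^{\mathbf{M}_{k}}$; since $s' \in C^{\mathbf{M}_{k}}$ and distinct SCCs of $\M^{\mathbf{M}_{k}}$ are disjoint, this holds as soon as $C^{\mathbf{M}_{k}} \neq \tilde{C}^{\mathbf{M}_{k}}$, and then $s' \in Succ(\tilde{C}^{\mathbf{M}_{k}}) \cap C^{\mathbf{M}_{k}}$ finishes the argument.

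The main obstacle I anticipate is precisely this last point: ruling out the degenerate situation in which the two product SCCs project to the \emph{same} base SCC, $C^{\mathbf{M}_{k}} = \tilde{C}^{\mathbf{M}_{k}}$. In that case the witnessing base transition stays inside a single base SCC, $s'$ is not exterior, and the necessary condition becomes vacuous because a strict order $C^{\mathbf{M}_{k}} \prec_{\M^{\mathbf{M}_{k}}} C^{\mathbf{M}_{k}}$ is excluded by irreflexivity. I would handle this by observing that the lemma is invoked only to order blocks with distinct base SCCs in the partition $\{D^{\mathbf{M}_{k}}_{p,1}, \ldots, D^{\mathbf{M}_{k}}_{p,m_{k}}\}$ of Section~\ref{ssec:inc_prob}, where all product SCCs sharing a base SCC are collapsed into one block, so the relevant instances automatically satisfy $C^{\mathbf{M}_{k}} \neq \tilde{C}^{\mathbf{M}_{k}}$; equivalently, one may simply state the lemma for distinct $C^{\mathbf{M}_{k}}$ and $\tilde{C}^{\mathbf{M}_{k}}$. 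Apart from this bookkeeping, the projection step is routine and follows the template of Lemma~\ref{lem:SCC-Mk+1:order}.
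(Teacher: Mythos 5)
Your proof is correct and follows essentially the same route as the paper's, which consists of a single sentence observing that a successor of $\langle s,q\rangle$ in $\M^{\mathbf{M}_{k}}_p$ projects to a successor of $s$ in $\M^{\mathbf{M}_{k}}$; your unfolding of the $Succ$-intersection and the projection of the witnessing transition is exactly that argument made explicit. The degenerate case you flag is a genuine omission in the paper's statement rather than a defect of your proof: if two distinct product SCCs lie over the same base SCC (which happens whenever the automaton component splits a base SCC), the projected successor $s'$ is not exterior to $\tilde{C}^{\mathbf{M}_{k}}$, and the stated conclusion fails by irreflexivity of the strict order, so the lemma does need the hypothesis $C^{\mathbf{M}_{k}} \neq \tilde{C}^{\mathbf{M}_{k}}$. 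Your observation that the only invocation of the lemma, in the proof of Lemma~\ref{lem:Dk}, always supplies distinct base SCCs (since $Succ(D^{\mathbf{M}_{k}}_{p,i})$ excludes $D^{\mathbf{M}_{k}}_{p,i}$ itself) is the correct repair.
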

\begin{proof}
This follows from the definition of product MDP since for any $\langle s, q \rangle \in S^{\mathbf{M}_{k}}_p$,
$\langle \tilde{s}, \tilde{q} \rangle \in S^{\mathbf{M}_{k}}_p$ is a successor of $\langle s, q \rangle$ in $\M^{\mathbf{M}_{k}}_p$
only if $\tilde{s}$ is a successor of $s$ in $\M^{\mathbf{M}_{k}}$.
\end{proof}

\begin{lemma}
\label{lem:Dk}
Let $C^{\mathbf{M}_{k}}_1, \ldots, C^{\mathbf{M}_{k}}_{m_k}$ be all the SCCs of $\M^{\mathbf{M}_{k}}$
and for each $i \in \{1, \ldots, m_k\}$, let $D^{\mathbf{M}_{k}}_{p,1} = C^{\mathbf{M}_{k}}_i \times Q$.
Then, $\{D^{\mathbf{M}_{k}}_{p,1}, \ldots, D^{\mathbf{M}_{k}}_{p,m_{k}}\}$
is a partition of $S^{\mathbf{M}_{k}}_p$.
In addition, the following statements hold for all $i,j \in \{1, \ldots, m_k\}$.
\begin{itemize}
\item If $D^{\mathbf{M}_{k}}_{p,i}$ contains a state in an SCC $C^{\mathbf{M}_{k}}_p$ of $\M^{\mathbf{M}_{k}}_p$,
then it contains all the states in $C^{\mathbf{M}_{k}}_p$.
\item $Succ(D^{\mathbf{M}_{k}}_{p,i}) \cap D^{\mathbf{M}_{k}}_{p,j} \not= \emptyset$ only if 
$C^{\mathbf{M}_{k}}_j  \prec_{\M^{\mathbf{M}_{k}}} C^{\mathbf{M}_{k}}_i$.
\end{itemize}
\end{lemma}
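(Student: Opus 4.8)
The plan is to dispatch the three assertions separately, in each case lifting a statement about the product MDP $\M^{\mathbf{M}_k}_p$ to a statement about the base MDP $\M^{\mathbf{M}_k}$, whose SCC structure is already in hand. First I would establish the partition claim. Since $C^{\mathbf{M}_k}_1, \ldots, C^{\mathbf{M}_k}_{m_k}$ are all the SCCs of $\M^{\mathbf{M}_k}$, they are pairwise disjoint and cover $S^{\mathbf{M}_k}$ (as recalled in Section \ref{ssec:policy}). Because $S^{\mathbf{M}_k}_p = S^{\mathbf{M}_k} \times Q$, taking the Cartesian product with $Q$ preserves both disjointness and coverage, so the sets $D^{\mathbf{M}_k}_{p,i} = C^{\mathbf{M}_k}_i \times Q$ are pairwise disjoint with union $S^{\mathbf{M}_k} \times Q = S^{\mathbf{M}_k}_p$. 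This yields the partition immediately.

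For the second bullet I would invoke Lemma \ref{lem:Ck}. Suppose $D^{\mathbf{M}_k}_{p,i}$ contains a state $\langle s, q \rangle$ belonging to an SCC $C^{\mathbf{M}_k}_p$ of $\M^{\mathbf{M}_k}_p$. Lemma \ref{lem:Ck} supplies a unique SCC $C^{\mathbf{M}_k}$ of $\M^{\mathbf{M}_k}$ with $C^{\mathbf{M}_k}_p \subseteq C^{\mathbf{M}_k} \times Q$; in particular $s \in C^{\mathbf{M}_k}$. But $\langle s, q \rangle \in D^{\mathbf{M}_k}_{p,i} = C^{\mathbf{M}_k}_i \times Q$ also forces $s \in C^{\mathbf{M}_k}_i$, so disjointness of the base SCCs gives $C^{\mathbf{M}_k} = C^{\mathbf{M}_k}_i$. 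Hence $C^{\mathbf{M}_k}_p \subseteq C^{\mathbf{M}_k}_i \times Q = D^{\mathbf{M}_k}_{p,i}$, i.e. every state of $C^{\mathbf{M}_k}_p$ lies in $D^{\mathbf{M}_k}_{p,i}$.

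For the third bullet I would unfold the definition of $Succ$. If $Succ(D^{\mathbf{M}_k}_{p,i}) \cap D^{\mathbf{M}_k}_{p,j} \neq \emptyset$, I pick a witness $\langle s', q' \rangle$ in the intersection together with a predecessor $\langle s, q \rangle \in D^{\mathbf{M}_k}_{p,i}$ and an action $\alpha$ with $\mathbf{P}^{\mathbf{M}_k}_p(\langle s, q \rangle, \alpha, \langle s', q' \rangle) > 0$ and $\langle s', q' \rangle \notin D^{\mathbf{M}_k}_{p,i}$. By the definition of the product MDP a positive product transition requires $\mathbf{P}^{\mathbf{M}_k}(s, \alpha, s') > 0$, while the memberships give $s \in C^{\mathbf{M}_k}_i$ and $s' \in C^{\mathbf{M}_k}_j$. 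The one place where care is needed is the following: because $D^{\mathbf{M}_k}_{p,i}$ is the full cylinder $C^{\mathbf{M}_k}_i \times Q$, the condition $\langle s', q' \rangle \notin D^{\mathbf{M}_k}_{p,i}$ translates exactly into $s' \notin C^{\mathbf{M}_k}_i$, independently of $q'$; this is precisely what makes $s'$ an immediate successor of $C^{\mathbf{M}_k}_i$ lying outside it, so $s' \in Succ(C^{\mathbf{M}_k}_i) \cap C^{\mathbf{M}_k}_j$, and by the definition of $\prec_{\M^{\mathbf{M}_k}}$ I conclude $C^{\mathbf{M}_k}_j \prec_{\M^{\mathbf{M}_k}} C^{\mathbf{M}_k}_i$. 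The argument is otherwise routine bookkeeping; the main thing to get right is this faithful reduction of product-level successors to base-level successors, which hinges entirely on each $D^{\mathbf{M}_k}_{p,i}$ being saturated in the $Q$-coordinate.
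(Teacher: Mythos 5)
Your proof is correct, and for the partition claim and the first bulleted item it matches the paper's argument (the paper treats the partition as immediate and derives the SCC-saturation property directly from Lemma \ref{lem:Ck}, exactly as you do). The one place you diverge is the successor condition. The paper's proof factors through the product-level SCC structure: from $Succ(D^{\mathbf{M}_{k}}_{p,i}) \cap D^{\mathbf{M}_{k}}_{p,j} \not= \emptyset$ it extracts SCCs $C^{\mathbf{M}_{k}}_{p,i} \subseteq D^{\mathbf{M}_{k}}_{p,i}$ and $C^{\mathbf{M}_{k}}_{p,j} \subseteq D^{\mathbf{M}_{k}}_{p,j}$ of $\M^{\mathbf{M}_{k}}_p$ with $Succ(C^{\mathbf{M}_{k}}_{p,i}) \cap C^{\mathbf{M}_{k}}_{p,j} \not= \emptyset$, concludes $C^{\mathbf{M}_{k}}_{p,j} \prec_{\M^{\mathbf{M}_{k}}_p} C^{\mathbf{M}_{k}}_{p,i}$, and then invokes Lemma \ref{lem:Ck-order} to project this order down to $\prec_{\M^{\mathbf{M}_{k}}}$. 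You instead bypass the product SCCs and Lemma \ref{lem:Ck-order} entirely, projecting the single witnessing transition $\mathbf{P}^{\mathbf{M}_{k}}_p(\langle s,q\rangle,\alpha,\langle s',q'\rangle)>0$ to $\mathbf{P}^{\mathbf{M}_{k}}(s,\alpha,s')>0$ and using the fact that each $D^{\mathbf{M}_{k}}_{p,i}$ is saturated in the $Q$-coordinate to conclude $s' \in Succ(C^{\mathbf{M}_{k}}_i) \cap C^{\mathbf{M}_{k}}_j$ directly. Both arguments rest on the same underlying observation (product transitions project to base transitions); yours is slightly more self-contained and avoids the paper's implicit step of locating product SCCs around the endpoints of the witnessing transition, while the paper's version reuses its already-established lemma machinery. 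Your correctly flagged subtlety --- that $\langle s',q'\rangle \notin D^{\mathbf{M}_{k}}_{p,i}$ is equivalent to $s' \notin C^{\mathbf{M}_{k}}_i$ precisely because the blocks are full cylinders over $Q$ --- is indeed the point on which the direct reduction hinges.
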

\begin{proof}
Consider arbitrary $i,j \in \{1, \ldots, m_k\}$.
It follows directly from Lemma \ref{lem:Ck} that 
if $D^{\mathbf{M}_{k}}_{p,i}$ contains a state in an SCC $C^{\mathbf{M}_{k}}_p$ of $\M^{\mathbf{M}_{k}}_p$,
then it contains all the states in $C^{\mathbf{M}_{k}}_p$.
Next, consider the case where $Succ(D^{\mathbf{M}_{k}}_{p,i}) \cap D^{\mathbf{M}_{k}}_{p,j} \not= \emptyset$.
Then, from Lemma \ref{lem:Ck}, there exist SCCs
$C^{\mathbf{M}_{k}}_{p,i} \subseteq D^{\mathbf{M}_{k}}_{p,i}$ and 
$C^{\mathbf{M}_{k}}_{p,j} \subseteq D^{\mathbf{M}_{k}}_{p,j}$ of $\M^{\mathbf{M}_{k}}_p$
such that $Succ(C^{\mathbf{M}_{k}}_{p,i}) \cap C^{\mathbf{M}_{k}}_{p,j} \not= \emptyset$.
Thus, $C^{\mathbf{M}_{k}}_{p,j} \prec_{\M^{\mathbf{M}_{k}}_p} C^{\mathbf{M}_{k}}_{p,i}$.
Applying Lemma \ref{lem:Ck-order}, we get
$C^{\mathbf{M}_{k}}_{j} \prec_{\M^{\mathbf{M}_{k}}} C^{\mathbf{M}_{k}}_{i}$.
\end{proof}

Applying Lemma \ref{lem:Dk}, we generate a partition
$\{D^{\mathbf{M}_{k}}_{p,1}, \ldots, D^{\mathbf{M}_{k}}_{p,m_{k}}\}$ of $S^{\mathbf{M}_{k}}_p$
where for each $i \in \{1, \ldots, m_k\}$, $D^{\mathbf{M}_{k}}_{p,1} = C^{\mathbf{M}_{k}}_i \times Q$
and $C^{\mathbf{M}_{k}}_1, \ldots, C^{\mathbf{M}_{k}}_{m_k}$ are all the SCCs of $\M^{\mathbf{M}_{k}}$.
A partial order $\prec_{\M^{\mathbf{M}_{k}}_p}$ over this partition is defined such that
$D^{\mathbf{M}_{k}}_{p,j} \prec_{\M^{\mathbf{M}_{k}}_p} D^{\mathbf{M}_{k+1}}_{p,i}$
if $C^{\mathbf{M}_{k}}_j  \prec_{\M^{\mathbf{M}_{k}}} C^{\mathbf{M}_{k}}_i$.
Hence, an order $\mathbb{O}^{\mathbf{M}_{k}}_p$ among $D^{\mathbf{M}_{k}}_{p,1}, \ldots, D^{\mathbf{M}_{k}}_{p,m_{k}}$
can be simply derived from the order of $C^{\mathbf{M}_{k}}_{1}, \ldots, C^{\mathbf{M}_{k}}_{m_k}$,
which can be incrementally constructed based on Lemma \ref{lem:SCC-Mk+1:order}.
This order $\mathbb{O}^{\mathbf{M}_{k}}_p$ has the property that
the probability values of states in $D^{\mathbf{M}_{k}}_{p,j}$ that appears after $D^{\mathbf{M}_{k}}_{p,i}$
in $\mathbb{O}^{\mathbf{M}_{k}}_p$ cannot affect the probability values of states in $D^{\mathbf{M}_{k}}_{p,i}$.
Hence, we can follow the SCC-based value iteration
and process each $D^{\mathbf{M}_{k}}_{p,i}$ separately, according to the order in $\mathbb{O}^{\mathbf{M}_{k}}_p$
to compute the probability $x_s$ for all $s \in D^{\mathbf{M}_{k}}_{p,i}$.
Finally, we generate a memoryless control policy $\C^{\mathbf{M}_{k}}$
from the probability vector $(x_s)_{s \in S^{\mathbf{M}_{k}}_p}$ as described at the end of Section \ref{sec:syn}.

\section{Experimental Results}
\label{sec:example}

Consider, once again, the autonomous vehicle problem described in Example \ref{ex:ped-models}
and Example \ref{ex:ped-spec}.
Suppose the road is discretized into 5 cells $c_0, \ldots, c_4$ where $c_2$ is the pedestrian crossing area
as shown in Figure \ref{fig:example}.
The vehicle starts in cell $c_0$ and has to reach cell $c_4$.
There are 5 pedestrians, modeled by MCs $\M_1, \ldots, \M_5$, initially at cell $c_1$.
The models of the vehicle and the pedestrians are shown in Figure \ref{fig:models}.
A DFA $\A_\varphi$ that accepts all and only words in $pref(\varphi)$ where
$\varphi = \left(\neg \bigvee_{i \geq 1, j \geq 0} (c^0_j \aand c^i_j)\right) \until c^0_4$
is shown in Figure \ref{fig:Aphi}.

\begin{figure}
\centering\includegraphics[width=0.3\textwidth]{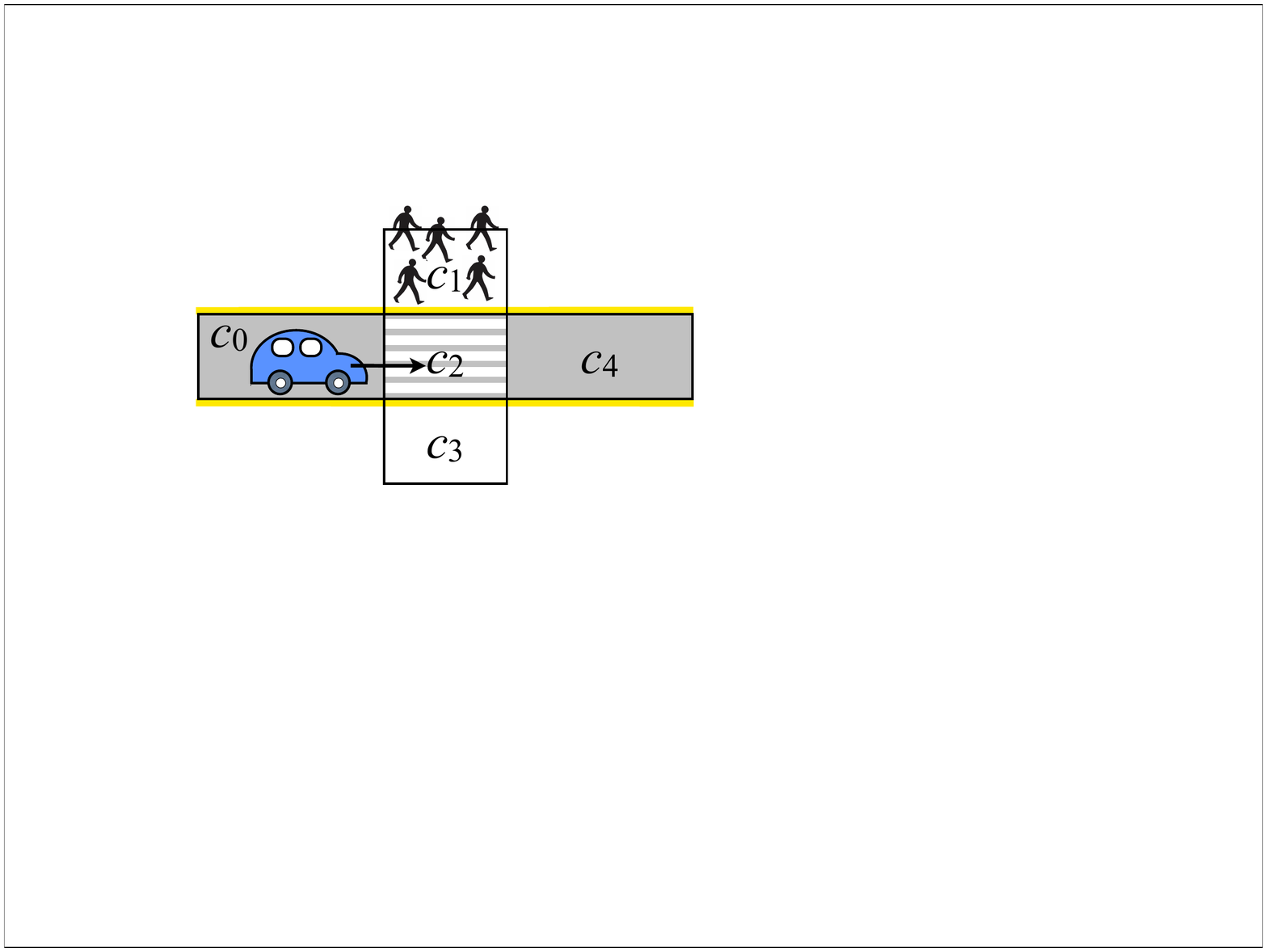}
\caption{The road and its partition used in the autonomous vehicle example.}
\label{fig:example}
\end{figure}

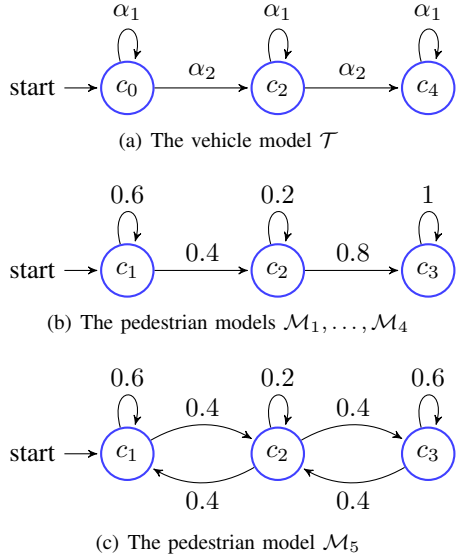
\begin{figure}
\centering 
\subfigure[The vehicle model $\T$]{
	\begin{tikzpicture}[->,>=stealth',shorten >=1pt,auto,node distance=2cm]
	  \tikzstyle{every state}=[circle,thick,draw=blue!75,minimum size=6mm]

	   \node [initial,state] (c0) at (1,0) {$c_0$};
	   \node [state] (c2) at (3,0){$c_2$};
	   \node [state] (c4) at (5,0) {$c_4$};
	   
	   \path (c0) edge [loop above] node {$\alpha_1$} (c0)
	                     edge node [] {$\alpha_2$} (c2)
	             (c2) edge [loop above] node {$\alpha_1$} (c2)
	                     edge node [] {$\alpha_2$} (c4)
	             (c4) edge [loop above] node {$\alpha_1$} (c4);
	\end{tikzpicture}
}

\subfigure[The pedestrian models $\M_1, \ldots, \M_4$]{
	\begin{tikzpicture}[->,>=stealth',shorten >=1pt,auto,node distance=2cm]
	  \tikzstyle{every state}=[circle,thick,draw=blue!75,minimum size=6mm]

	   \node [initial,state] (c1) at (1,0) {$c_1$};
	   \node [state] (c2) at (3,0){$c_2$};
	   \node [state] (c3) at (5,0) {$c_3$};
	   
	   \path (c1) edge [loop above] node {$0.6$} (c1)
	                     edge node [] {$0.4$} (c2)
	             (c2) edge [loop above] node {$0.2$} (c2)
	                     edge node [] {$0.8$} (c3)
	             (c3) edge [loop above] node {$1$} (c3);
	\end{tikzpicture}
}

\subfigure[The pedestrian model $\M_5$]{
	\begin{tikzpicture}[->,>=stealth',shorten >=1pt,auto,node distance=2cm]
	  \tikzstyle{every state}=[circle,thick,draw=blue!75,minimum size=6mm]

	   \node [initial,state] (c1) at (1,0) {$c_1$};
	   \node [state] (c2) at (3,0){$c_2$};
	   \node [state] (c3) at (5,0) {$c_3$};
	   
	   \path (c1) edge [loop above] node {$0.6$} (c1)
	                     edge [bend left] node {$0.4$} (c2)
	             (c2) edge [loop above] node {$0.2$} (c2)
	                     edge [bend left]  node{$0.4$} (c3)
	                     edge [bend left] node {$0.4$} (c1)
	             (c3) edge [loop above] node {$0.6$} (c3)
	                     edge [bend left] node {$0.4$} (c2);
	\end{tikzpicture}
}

   \caption{The models of vehicle and pedestrians.}
   \label{fig:models}
\end{figure}

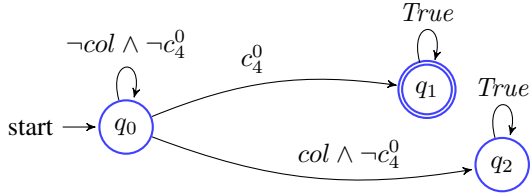
\begin{figure}
\centering 
\begin{tikzpicture}[->,>=stealth',shorten >=1pt,auto,node distance=2cm, bend angle=15]
  \tikzstyle{every state}=[circle,thick,draw=blue!75,minimum size=6mm]

   \node [initial,state] (q0) at (1,0) {$q_0$};
   \node [state, double] (q1) at (5,0.5){$q_1$};
   \node [state] (q2) at (6,-0.5) {$q_2$};
   
   \path (q0) edge [loop above] node {$\neg col \aand \neg c^0_4$} (q0)
                     edge [bend left] node {$c^0_4$} (q1)
                     edge [bend right] node {$\hspace{-3mm}col \aand \neg c^0_4$} (q2)
             (q1) edge [loop above] node {$\true$} (q1)
             (q2) edge [loop above] node {$\true$} (q2);
\end{tikzpicture}
   \caption{A DFA $\A_\varphi$ that recognizes the prefixes of
   $\varphi = \neg col \until c^0_4$ where $col$ is defined as $col = \bigvee_{i \geq 1, j \geq 0} (c^0_j \aand c^i_j)$.
   $q_1$ is the accepting state.}
   \label{fig:Aphi}
\end{figure}

First, we apply the LP-based, value iteration and SCC-based value iteration techniques described in Section \ref{sec:syn}
to synthesize a control policy that maximizes the probability that the complete system $\M = \T || \M_1 || \M_2 || \ldots || \M_5$ satisfies $\varphi$.
The time required for each step of computation is summarized in Table \ref{tab:time-full}.
All the approaches yield the probability of 0.8 that $\M$ satisfies $\varphi$ under the synthesized control policy. 
The comparison of the total computation time required for these different approaches is shown in Figure \ref{fig:result_5ped}.
As discussed in Section \ref{ssec:policy}, although the SCC-based value iteration itself takes
significantly less computation time than the LP-based technique or value iteration,
the time spent in identifying SCCs and their order renders the total computation time of the SCC-based value iteration
more than the other two approaches.

\begin{table}[h]
\centering
\begin{tabular}{ | c | c | c | c | c | c |}
  \hline                
  Technique & 
  \hspace{-3mm}$\M_p$\hspace{-3mm} & 
  \hspace{-3mm}$\begin{array}{c}\hbox{SCCs}\\ \hbox{\& order}\\ \hbox{of } \M_p\end{array}$\hspace{-3mm} & 
  \hspace{-2mm}$\begin{array}{c}\hbox{Prob}\\ \hbox{vector}\end{array}$\hspace{-2mm} & 
  \hspace{-2mm}$\begin{array}{c}\hbox{Control}\\ \hbox{policy}\end{array}$\hspace{-2mm} & 
  \hspace{-1mm}Total\hspace{-1mm} \\
  \hline
  \hline  
  LP & 
  156.3 & - & 8.8 & 6.8 & 171.9\\
  \hline
  \hspace{-3mm}$\begin{array}{c}\hbox{Value}\\ \hbox{iteration}\end{array}$\hspace{-3mm} &
  156.3 & - & 31.3 & 6.8 & 194.4\\
  \hline
  \hspace{-3mm}$\begin{array}{c}\hbox{SCC-based}\\ \hbox{value iteration}\end{array}$\hspace{-3mm} &
  156.3 & 71.1 & 1.9 & 6.8 & 236.1\\
  \hline  
\end{tabular}
\caption{Time required (in seconds) for computing various objects using different techniques when the full models of
all the environment agents are considered.}
\label{tab:time-full}
\end{table}

Next, we apply the incremental technique where we progressively compute a sequence of control policies as more agents
are added to the synthesis procedure in each iteration 
as described in Section \ref{sec:incremental}.
We let $\mathbf{M}_{0} = \emptyset$, $\mathbf{M}_{1} = \{\M_1\}$, $\mathbf{M}_{2} = \{\M_1, \M_2\}, \ldots$, 
$\mathbf{M}_{6} = \{\M_1, \ldots, \M_5\}$,
i.e., we successively add each pedestrian $\M_1, \M_2, \ldots, \M_5$, respectively, in each iteration.
We consider 2 cases:
(1) no incremental construction of various objects is employed (i.e., when $\M^{\mathbf{M}_{k+1}}$ and $\M^{\mathbf{M}_{k+1}}_p$,
$k \geq 0$ are computed from scratch in every iteration), and
(2) incremental construction of various objects as described in Section \ref{ssec:inc_pMDP}--\ref{ssec:inc_prob} is applied.
For the first case, we apply the LP-based technique to compute the probability vector as
it has been shown to be the fastest technique when applied to this problem, 
taking into account the required pre-computation, which needs to be done in every iteration.
For both cases, 6 control policies $\C^{\mathbf{M}_0}, \ldots, \C^{\mathbf{M}_5}$ are generated 
for $\M^{\mathbf{M}_{0}}, \ldots, \M^{\mathbf{M}_{5}}$, respectively.
For each policy $\C^{\mathbf{M}_k}$, we compute the probability $\prob^{\C^{\mathbf{M}_k}}_{\M}(\varphi)$
that the complete system $\M$ satisfies $\varphi$ under policy $\C^{\mathbf{M}_k}$.
(Note that $\C^{\mathbf{M}_k}$, when applied to $\M$, is only a function of states of $\M_i \in \mathbf{M}_k$ since
it assumes that the other agents $\M_j \not\in \mathbf{M}_k$ are stationary.)
These probabilities are given by
$\prob^{\C^{\mathbf{M}_0}}_{\M}(\varphi) = 0.08$,
$\prob^{\C^{\mathbf{M}_1}}_{\M}(\varphi) = 0.46$,
$\prob^{\C^{\mathbf{M}_2}}_{\M}(\varphi) = 0.57$,
$\prob^{\C^{\mathbf{M}_3}}_{\M}(\varphi) = 0.63$,
$\prob^{\C^{\mathbf{M}_4}}_{\M}(\varphi) = 0.67$ and
$\prob^{\C^{\mathbf{M}_5}}_{\M}(\varphi) = 0.8$.

The comparison of the cases where the incremental construction of various objects is not and is employed is shown in Figure \ref{fig:result_5ped}.
A jump in the probability occurs each time a new control policy is computed.
The time spent during each step of computation is summarized in Table \ref{tab:time-noninc} and Table \ref{tab:time-inc}
for the first and the second case, respectively.
Notice that the time required for identifying the SCCs and their order when
the incremental approach is applied is significantly less than
when the full model of all the pedestrians is considered in one shot 
since $\M^{\mathbf{M}_{0}}$, $\M_1, \ldots, \M_5$, each of which contains 3 states, 
are much smaller than $\M_p$, which contains 2187 states.

\begin{figure}
   \centering 
   \includegraphics[width=0.5\textwidth]{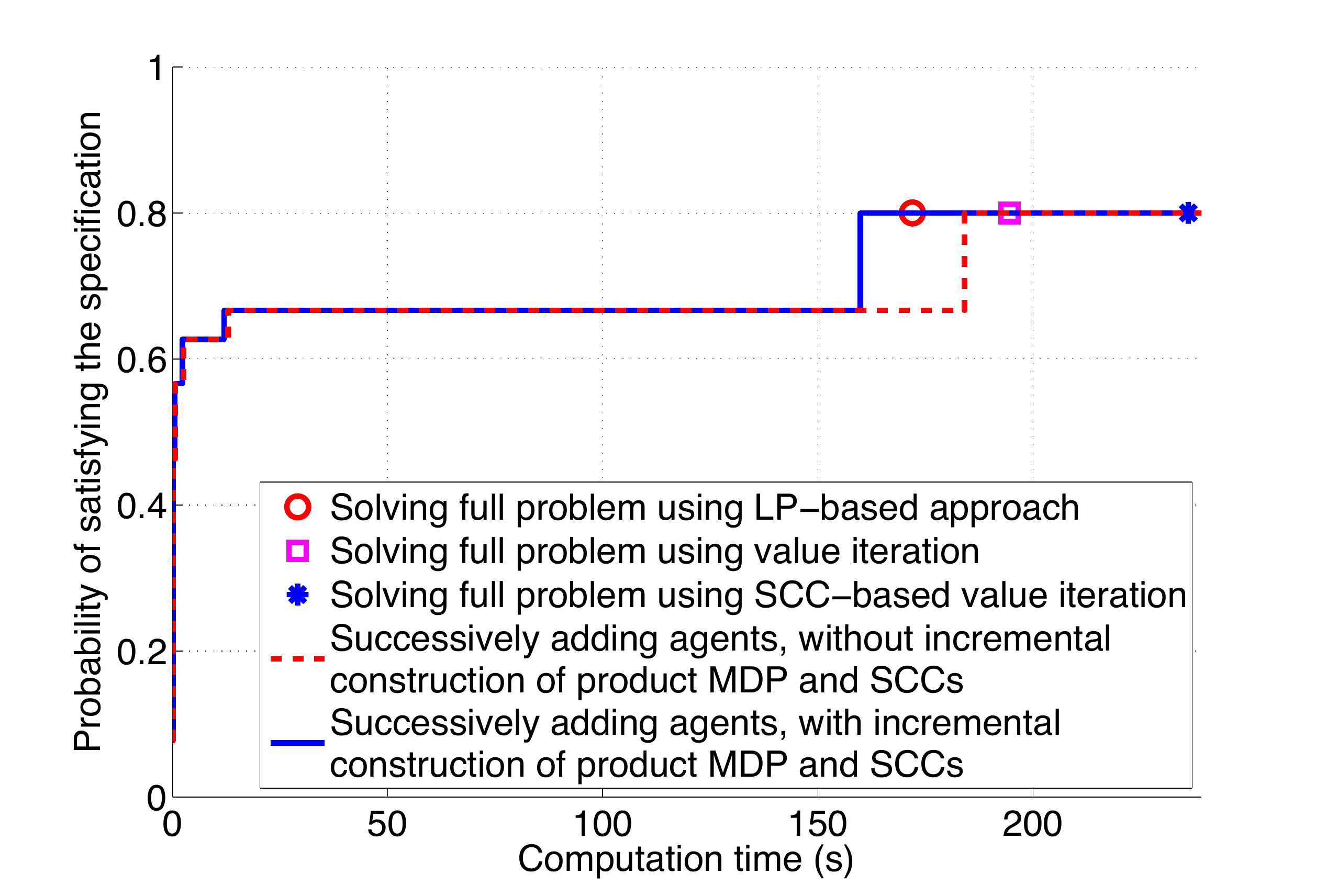}
   \caption{Comparison of the computation time and the probability for the system to satisfy the specification computed
   using different techniques.}
   \label{fig:result_5ped} 
\end{figure}

From Figure \ref{fig:result_5ped}, our incremental approach is able to obtain
an optimal control policy faster than any other techniques.
This is mainly due to the efficiency of our incremental construction of SCCs and their order.
In addition, we are able to obtain a reasonable solution,
with 0.67 probability of satisfying $\varphi$,
within 12 seconds while the maximum probability of satisfying $\varphi$ is 0.8, which requires 160 seconds of computation
(or 171.9 seconds without employing the incremental approach).

\begin{table}[h]
\centering
\begin{tabular}{ | c | c | c | c | c | c |}
  \hline                
  \hspace{-1mm}Iteration\hspace{-1mm} & 
  \hspace{-2mm}$\M^{\mathbf{M}_{k}}$\hspace{-2mm} & 
  \hspace{-2mm}$\M^{\mathbf{M}_{k}}_p$\hspace{-2mm} & 
  \hspace{-2mm}$\begin{array}{c}\hbox{Prob}\\ \hbox{vector}\end{array}$\hspace{-2mm} & 
  \hspace{-2mm}$\begin{array}{c}\hbox{Control}\\ \hbox{policy}\end{array}$\hspace{-2mm} & 
  \hspace{-1mm}Total\hspace{-1mm} \\
  \hline
  \hline  
  0 & 0.0064 & 0.0185 & 0.0464 & 0.0084 & 0.08\\
  \hline
  1 & 0.0123 & 0.0762 & 0.0203 & 0.0104 & 0.12\\
  \hline
  2 & 0.0154 & 0.3383 & 0.0231 & 0.0296 & 0.41\\
  \hline
  3 & 0.0357 & 1.7055 & 0.0542 & 0.1503 & 1.95\\
  \hline
  4 & 0.1393 & 9.1950 & 0.2155 & 0.7975 & 10.35\\
  \hline
  5 & 3.1836 & 152.86 & 8.2302 & 6.8938 & 171.17\\
  \hline  
\end{tabular}
\caption{Time required (in seconds) for computing various objects in each iteration when incremental construction is not applied.}
\label{tab:time-noninc}
\end{table}

\begin{table}[h]
\centering
\begin{tabular}{ | c | c | c | c | c | c | c |}
  \hline                
  \hspace{-3mm}$\begin{array}{c}\hbox{Iter-}\\ \hbox{ation}\end{array}$\hspace{-3mm} & 
  \hspace{-2mm}$\M^{\mathbf{M}_{0}}$\hspace{-2mm} & 
  \hspace{-3mm}$\begin{array}{c}\hbox{SCCs \& order}\\ \hbox{of } \M^{\mathbf{M}_{0}},\\ \M_1, \ldots, \M_5\end{array}$\hspace{-3mm} & 
  \hspace{-3mm}$\begin{array}{c}\M^{\mathbf{M}_{k}}_p,\\ \hbox{partition}\\ \hbox{\& order}\end{array}$\hspace{-3mm} & 
  \hspace{-2mm}$\begin{array}{c}\hbox{Prob}\\ \hbox{vector}\end{array}$\hspace{-2mm} & 
  \hspace{-2mm}$\begin{array}{c}\hbox{Control}\\ \hbox{policy}\end{array}$\hspace{-2mm} & 
  \hspace{-1mm}Total\hspace{-1mm} \\
  \hline
  \hline  
  0 & \hspace{-1mm}0.0055\hspace{-1mm} & 0.0043 & 0.0203     & 0.0112 & 0.0036   & 0.04\\
  \hline
  1 & -             & -            & 0.0726     & 0.0102 & 0.0087   & 0.09\\
  \hline
  2 & -             & -            & 0.3239     & 0.0193 & 0.0282   & 0.37\\
  \hline
  3 & -             & -            & 1.6036     & 0.0567 & 0.1424   & 1.80\\
  \hline
  4 & -             & -            & 8.6955     & 0.1876 &  0.7755  & 9.66 \\
  \hline
  5 & -             & -            & 139.27     & 1.6122 &  7.0125  & 147.89\\
  \hline  
\end{tabular}
\caption{Time required (in seconds) for computing various objects in each iteration when incremental construction is applied.}
\label{tab:time-inc}
\end{table}

\section{Conclusions and Future Work}
\label{sec:conclusions}
An anytime algorithm for synthesizing a control policy for a robot interacting with multiple environment agents
with the objective of maximizing the probability for the robot to satisfy a given temporal logic specification was proposed.
Each environment agent is modeled by a Markov chain whereas the robot is modeled by a finite transition system 
(in the deterministic case) or Markov decision process (in the stochastic case).
The proposed algorithm progressively computes a sequence of control policies, taking into account only a small subset of the environment agents initially 
and successively adding more agents to the synthesis procedure in each iteration until we hit the constraints on computational resources.
Incremental construction of various objects needed to be computed during the synthesis procedure was proposed.
Experimental results showed that not only we obtain a reasonable solution much faster than
existing approaches, but we are also able to obtain an optimal solution faster than existing approaches.

Future work includes extending the algorithm to handle full LTL specifications.
This direction appears to be promising because the remaining step is only to
incrementally construct \emph{accepting maximal end components} of an MDP.
We are also examining an effective approach to determine an agent to be added in each iteration.
As mentioned in Section \ref{ssec:overview}, such an agent may be picked based on the result
from probabilistic verification but this comes at the extra cost of adding the verification phase.

\bibliographystyle{ieeetr}
\bibliography{ref}

\end{document}